\documentclass{article}

\usepackage{microtype}
\usepackage{graphicx}
\usepackage{subcaption}
\usepackage{booktabs} 

\usepackage{hyperref}

\usepackage[preprint]{icml2026}

\usepackage[utf8]{inputenc}
\usepackage{textgreek}

\usepackage{amsmath}
\usepackage{amssymb}
\usepackage{mathtools}
\usepackage{amsthm}
\usepackage{colortbl}
\usepackage{multirow}
\usepackage{microtype}
\usepackage{subcaption}

\usepackage[capitalize,noabbrev]{cleveref}

\theoremstyle{plain}
\newtheorem{theorem}{Theorem}[section]

\newtheorem{lemma}[theorem]{Lemma}

\theoremstyle{definition}

\newtheorem{assumption}[theorem]{Assumption}
\theoremstyle{remark}
\newtheorem{remark}[theorem]{Remark}

\newcommand{\braces}[1]{\left\{#1\right\}}

\usepackage[textsize=tiny]{todonotes}

\usepackage{enumitem}

\icmltitlerunning{Many Needles in a Haystack: Active Hit Discovery for Perturbation Experiments}

\begin{document}

\twocolumn[
  \icmltitle{Many Needles in a Haystack: Active Hit Discovery for Perturbation Experiments}



  \icmlsetsymbol{equal}{*}

  \begin{icmlauthorlist}
    \icmlauthor{Andrea Rubbi}{equal,sanger,cambridgecs}
    \icmlauthor{Arpit Merchant}{equal,sanger}
    \icmlauthor{Samuel Ogden}{sanger}
    \icmlauthor{Amir Akbarnejad}{sanger,cambridgemed}
    \icmlauthor{{Pietro Liò}}{cambridgecs}
    \icmlauthor{Sattar Vakili}{mediatek}
    \icmlauthor{Mo Lotfollahi}{sanger,cambridgemed,cambridgestem}
  \end{icmlauthorlist}

  \icmlaffiliation{sanger}{Wellcome Sanger Institute, Wellcome Genome Campus, Hinxton, UK;}
  \icmlaffiliation{cambridgemed}{Cambridge Center for AI in Medicine, University of Cambridge, Cambridge, UK;}
  \icmlaffiliation{cambridgestem}{Cambridge Stem Cell Institute, University of Cambridge, Cambridge, UK;}
  \icmlaffiliation{cambridgecs}{Department of Computer Science and Technology, University of Cambridge, Cambridge, UK;}
  \icmlaffiliation{mediatek}{MediaTek Research, Cambridge, UK.}

  \icmlcorrespondingauthor{Andrea Rubbi}{ar36@sanger.ac.uk}
  \icmlcorrespondingauthor{Arpit Merchant}{am84@sanger.ac.uk}

  \icmlkeywords{Machine Learning, ICML}

  \vskip 0.3in
]



\printAffiliationsAndNotice{}  

\begin{abstract}
  High-throughput gene perturbation experiments can test several genetic interventions in parallel, yet experimental budgets remain limited. A central goal is hit discovery: identifying as many perturbations as possible whose phenotypic effect exceeds a predefined threshold. Pure exploration strategies are statistically inefficient, wasting budget on low-value regions. Bayesian optimization methods offer a principled alternative but target a single global optimum, over-exploiting dominant modes while neglecting other high-value regions. We formalize hit discovery as a sequential experimental design problem and propose Probability-of-Hit, an acquisition function that directly targets threshold exceedance by ranking candidates according to their posterior probability of being a hit. We prove asymptotic optimality of this approach and demonstrate strong empirical performance on both synthetic benchmarks and real biological immunology datasets, including up to 6.4\% improvement over baselines on the Schmidt IL-2 dataset.

\end{abstract}






\section{Introduction}\label{sec:introduction}
High-throughput gene perturbation technologies such as CRISPR-based screens and pooled single-cell assays have made it possible to probe the functional effects of genetic interventions~\citep{ben2018genome,maclean2018exploring}. 
Despite these advances, the space of candidate genes and gene combinations remains combinatorially vast~\citep{trapnell2015defining,worzfeld2017unique} and given their high monetary costs~\citep{dimasi2016innovation,dickson2009cost}, only a small fraction of all possible perturbations can be tested in practice. As a result, a central challenge in modern functional genomics is \emph{how to adaptively design perturbation experiments so as to efficiently discover biologically meaningful effects under limited experimental resources}.  In effect, adaptive perturbation design becomes a many-needle search problem, where sparse, high-impact perturbations must be identified within a combinatorially large space (haystack) using only a limited number of evaluations.

\begin{figure*}[ht!]
    \centering
    \includegraphics[width=1\linewidth]{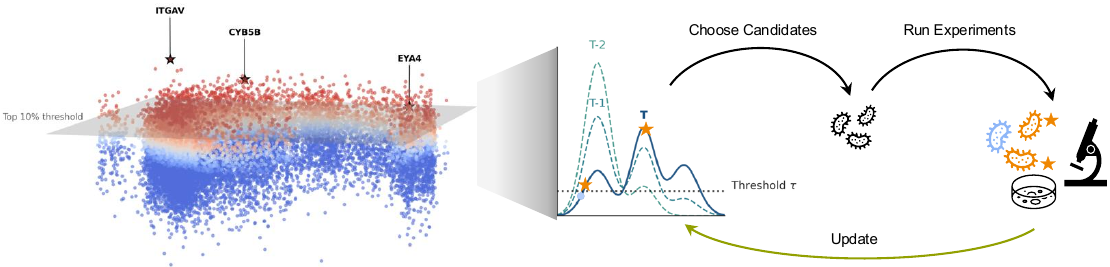}
    \caption{(a) 2D UMAP gene embeddings of Achilles features with the Z-axis denoting phenotypic response to sensitivity of leukemia cells to cytotoxic activity in human NK cells and colors indicating threshold levels. ITGAV, CYB5B and EYA4 are representative genes from three separate modes. (b) Overview of our closed-loop experimental design system. In each cycle, the algorithm samples a batch of genes from a probabilistic surrogate model. These are sent to the lab for validation through real-world (in-vivo or in-vitro) experiments. Finally, the surrogate is updated through the empirically generated responses.}
    \label{fig:hit_discovery_schematic}
\end{figure*}

To this end, a key objective in perturbation screening is \emph{hit discovery}~\citep{schneeberger2014using,hughes2011principles}: identifying genes (or gene combinations) whose perturbation induces a phenotypic response exceeding a predefined threshold of interest. This setting arises naturally in applications ranging from discovery of disease-relevant regulators~\citep{przybyla2022new}, to identification of drug targets~\citep{wang2023crispr}, and prioritization of candidate genes for downstream mechanistic studies~\citep{lim2022emerging}. Importantly, the goal is not necessarily to find a single optimal perturbation, but rather to recover as many distinct high-effect perturbations as possible within a fixed (small) budget~\citep{huang2024sequential,chappell2018single}.

This objective fundamentally differs from classical optimization and poses unique algorithmic challenges. Pure exploration strategies such as uncertainty sampling~\citep{lewis1994sequential} or maximum entropy selection~\citep{mackay1992information} prioritize sampling uncertain perturbations, making them statistically inefficient for hit discovery. Given an estimated 20,000 protein-coding genes~\citep{pertea2018chess} and small, sparse hit regions in the perturbation space, these methods expend substantial budget on low-value regions while remaining unlikely to yield hits. Conversely, standard Bayesian optimization methods such as Upper Confidence Bound~\citep[UCB,][]{srinivas2009gaussian} or Expected Improvement~\citep{jones1998efficient} are designed to locate a global optimum. While effective in unimodal settings, these approaches tend to over-exploit dominant modes and systematically neglect alternative high-response regions, leading to poor coverage in multimodal landscapes where biologically meaningful effects may be distributed across diverse and disconnected regions~\citep{kirsch2019batchbald}.

These limitations are particularly acute in biological systems, where response functions are often highly multimodal, structured, and heterogeneous~\citep{song2025decoding,vinasTorne2025systema}.
Gene perturbations may act through multiple independent pathways, exhibit context-specific effects, or interact nonlinearly with latent cellular states~\citep{gavriilidis2024perturbation}. In such settings, the optimal strategy for maximizing the number of discovered hits requires balancing exploitation of known high-response regions with targeted exploration of under-sampled but promising areas of the perturbation space (cf.\ Figure~\ref{fig:hit_discovery_schematic}), a balance that substantially differs from the exploration-exploitation trade-off in traditional Bayesian optimization.

In this work, we formalize hit discovery as a closed-loop experimental design problem and develop active learning algorithms explicitly tailored to this objective. Rather than optimizing for the maximum predicted response, we focus on acquisition strategies that maximize the expected recovery of threshold-exceeding perturbations under uncertainty. Our approach builds on Bayesian modeling of perturbation effects and leverages posterior predictive distributions to guide batch selection in a principled manner.

\paragraph{Contributions}
We make four key contributions:
\begin{itemize}
    \item We formally define the hit discovery problem, highlighting the shortcomings of existing exploration and optimisation-based strategies;
    \item We introduce a novel acquisition function Probability-of-Hit that directly targets the hit discovery objective;
    \item We theoretically prove the asymptotic optimality of our proposed algorithm for the task of recovery of threshold-exceeding perturbations under uncertainty;
    \item We empirically evaluate these methods on a suite of challenging synthetic benchmarks and single-cell perturbation datasets, demonstrating consistent improvements in hit recovery over standard baselines.
\end{itemize}

\section{Related Work}\label{sec:related_work}
\paragraph{Active learning and Bayesian optimization.}
Active learning and Bayesian optimization (BO) provide principled frameworks for sequential experiment design under uncertainty, typically by modeling an unknown response function with a probabilistic surrogate and selecting queries via an acquisition function \cite{shahriari2016review}. Classical BO methods such as Upper Confidence Bound (UCB), Expected Improvement (EI), and Probability of Improvement (PI) are primarily designed to identify a global optimum of the response function \cite{srinivas2010gpucb}. While these methods are statistically efficient for optimization, they are poorly aligned with objectives that require recovering \emph{multiple} high-response regions. In particular, UCB-style methods tend to concentrate samples around dominant modes, which is suboptimal in multimodal settings where the goal is to discover a diverse set of threshold-exceeding points rather than a single maximum \cite{srinivas2010gpucb}.

Pure exploration strategies, which prioritize uncertainty reduction independently of the predictive mean, have also been studied in active learning and bandit literature \cite{audibert2010best,bubeck2011pure}. However, such methods typically waste experimental budget on low-reward regions and achieve hit discovery rates comparable to random sampling when the proportion of hits is small \cite{jamieson2014lil}. These limitations motivate acquisition objectives that explicitly target hit recovery rather than uncertainty reduction or global optimization.


\paragraph{Thompson sampling and batch selection.}
Thompson sampling is a widely used Bayesian strategy that selects actions by sampling from the posterior distribution over functions and acting optimally under the sampled model \cite{russo2018tutorial}. It has attractive theoretical and empirical properties and naturally balances exploration and exploitation. Extensions to pure-exploration and fixed-confidence settings have been studied extensively in the bandit literature \cite{audibert2010best,bubeck2011pure}. While Thompson sampling is well suited for optimization and regret minimization, its behavior for hit discovery objectives has received comparatively less attention. In particular, naive Thompson sampling may still oversample dominant modes, motivating batch-aware variants that explicitly condition on virtual observations to encourage diversity and coverage \cite{russo2018tutorial}.

\paragraph{Active experimental design in biology.}
Active learning has increasingly been applied to biological experiment design, including protein engineering, molecular optimization, and genetic perturbation studies \cite{borkowski2020protein,wu2019directed}. Recent approaches leverage surrogate models over genes or perturbations, often using Gaussian processes or neural networks, to guide experiment selection \cite{zhang2023causal,li2025biobo}. Some methods explicitly optimize downstream biological objectives such as differential expression magnitude or fitness effects, while others aim to maximize information gain \cite{tosh2025combination,soper2025topk}. However, most existing approaches implicitly assume unimodal response landscapes or focus on optimizing average performance rather than maximizing the discovery of distinct high-effect perturbations. As a result, they may underperform in realistic biological settings characterized by heterogeneity, context dependence, and multiple mechanistic pathways \cite{hart2015highresolution,rottner2023calcoco2}.

\paragraph{DiscoBAX and objective-driven active learning.}
DiscoBAX introduces a general framework for \emph{objective-driven} active learning, in which the acquisition strategy is derived from a user-specified downstream scientific objective rather than generic uncertainty reduction or optimization criteria \cite{lyle2023discobax}. By casting experimental design as a Bayesian decision problem, DiscoBAX enables the construction of task-specific acquisition functions for objectives such as level-set estimation, rare-event discovery, or hypothesis testing. DiscoBAX primarily focuses on sequential (single-point) selection and generic objective templates, and does not explicitly address the challenges posed by multimodal response landscapes, large discrete action spaces, or batch experimental constraints typical of modern perturbation screens. Our work builds on the same decision-theoretic motivation but develops acquisition functions and batch-aware algorithms that are specifically tailored to scalable hit discovery in high-dimensional and multimodal biological settings \cite{hao2025perturboagent}.

\section{Setting}\label{sec:setting}
We study closed-loop experimental design for gene perturbation studies under a limited experimental budget. The experimenter sequentially selects batches of perturbations, observes phenotypic responses, and updates a predictive model to guide future selections. Our primary objective is \emph{hit discovery}: identifying as many perturbations as possible whose effect exceeds a predefined threshold.

\subsection{Perturbation Space}

Let $\mathcal{G}$ denote the set of candidate perturbations. In the simplest setting, $\mathcal{G}$ corresponds to a library of single-gene perturbations, typically comprising tens of thousands of genes. In combinatorial settings where perturbations correspond to pairs or higher-order combinations of genes, the size of $\mathcal{G}$ grows rapidly, often exceeding hundreds of millions of candidates.

Each perturbation $g \in \mathcal{G}$ induces an unknown scalar phenotypic response
\[
f(g) \in \mathbb{R},
\]
which may represent, for example, a transcriptional score, fitness effect, or pathway activation metric. We assume that querying $f(g)$ is expensive and corresponds to performing a biological experiment.

\subsection{Hit Discovery Objective}

Given a user-specified threshold $\tau \in \mathbb{R}$, we define the (unknown) set of hits as
\[
\mathcal{G}^\star := \{ g \in \mathcal{G} \mid f(g) > \tau \}.
\]
The experiment proceeds over $T$ rounds. At each round $t = 1, \dots, T$, the experimenter selects a batch $B_t \subset \mathcal{G}$ of fixed size $b$ and observes noisy evaluations $\{ y_g : g \in B_t \}$, where
\[
y_g = f(g) + \varepsilon_g,
\]
and $\varepsilon_g$ denotes observation noise.
Let
\[
B_t^+ := \{ g \in B_t \mid f(g) > \tau \}
\]
denote the hits discovered at round $t$, and define the cumulative discovered hit set as
\[
\mathcal{G}_T := \bigcup_{t=1}^T B_t^+.
\]
Our goal is to maximize hit recovery, measured either as the absolute number of discovered hits $|\mathcal{G}_T|$ or as the hit ratio
\[
\mathrm{HitRatio}_T := \frac{|\mathcal{G}_T|}{|\mathcal{G}^\star|}.
\]

\subsection{Sequential and Batch Protocol}

We consider a batch setting motivated by modern high-throughput experiments, where multiple perturbations can be executed in parallel. Importantly, selections within a batch must be made without access to the true outcomes of other perturbations in the same batch. 

Let $n_t = tb$ denote the total number of perturbations evaluated up to round $t$. At each round, the experimenter conditions on the dataset
\[
\mathcal{D}_t := \{ (g, y_g) : g \in B_s,\, s < t \}
\]
to guide the selection of $B_t$.

\subsection{Predictive Model}

We assume access to a probabilistic predictive model that, conditioned on $\mathcal{D}_t$, provides a posterior predictive distribution for each $g \in \mathcal{G}$, characterized by a mean $\mu_t(g)$ and uncertainty $\sigma_t(g)$. Our algorithms are agnostic to the specific choice of model, and this abstraction encompasses Gaussian processes~\citep{rasmussen2006gaussian}, Bayesian neural networks~\citep{neal1996bayesian, blundell2015weight}, and neural surrogates equipped with calibrated uncertainty estimates~\citep{gal2016dropout, lakshminarayanan2017simple}.

Crucially, the predictive model induces a posterior distribution over the unknown response function $f$. This enables reasoning about counterfactual observations and supports acquisition strategies based on expected utility or posterior sampling.

\subsection{Challenges}

This setting presents several challenges that distinguish hit discovery from classical optimization. The hit discovery objective is fundamentally different from regret minimization in traditional Bayesian optimization: neither implies the other, leading to a fundamentally different exploration-exploitation trade-off that requires novel algorithmic developments and analyses distinct from those used in standard bandit and optimization settings. Batch selection introduces additional complexity, as points must be chosen without observing the outcomes of other points in the same batch. Furthermore, the action space is large and often discrete, limiting applicability of gradient-based acquisition optimization. These considerations motivate acquisition functions and batch construction strategies that are explicitly aligned with the hit discovery objective. We show that a straightforward algorithm achieves near-optimal performance.

\section{An Algorithm to Discover Many Needles in a Haystack}\label{sec:algorithms}
We present an algorithm for hit discovery under the setting of Section~\ref{sec:setting}. We assume access to a probabilistic surrogate model that provides, for each candidate perturbation $g \in \mathcal{G}$ at round $t$, a predictive mean $\mu_t(g)$ and predictive variance $\sigma_t^2(g)$. We focus on batch selection with the constraint that each perturbation can be evaluated at most once.

\subsection{Probability-of-Hit}
\label{sec:probhit}

A natural baseline is to select the top-$k$ candidates by predicted mean $\mu_t(g)$. However, this approach ignores uncertainty and can be brittle when the surrogate is miscalibrated or has limited coverage. A principled alternative is to rank candidates by their posterior probability of exceeding the hit threshold $\tau$. Define the posterior hit probability
\begin{equation}
p_t(g) \;:=\; \mathbb{P}\bigl(f(g) > \tau \mid \mathcal{D}_t \bigr).
\label{eq:pt_def}
\end{equation}
When the predictive distribution is Gaussian,
$f(g) \mid \mathcal{D}_t \sim \mathcal{N}(\mu_t(g), \sigma_t^2(g))$,
this probability has the closed form
\begin{equation}
p_t(g) \;=\; 1 - \Phi\!\left(\frac{\tau - \mu_t(g)}{\sigma_t(g)}\right),
\label{eq:pt_gauss}
\end{equation}
where $\Phi$ denotes the standard normal CDF is the observation-noise variance.

At each round, we select $b$ unsampled candidates with the largest $p_t(g)$. This strategy balances mean and uncertainty in a way that is directly aligned with hit discovery.

\begin{algorithm}[t]
\caption{Probability-of-Hit}
\label{alg:poh}
\begin{algorithmic}[1]
\REQUIRE Candidate set $\mathcal{G}$, threshold $\tau$, batch size $b$, rounds $T$, surrogate posterior providing $(\mu_t, \sigma_t^2)$
\STATE Initialize $\mathcal{D}_1 \gets \emptyset$
\FOR{$t = 1$ to $T$}
    \STATE Fit/condition surrogate on $\mathcal{D}_t$
    \STATE $C_t \gets \mathcal{G} \setminus \bigcup_{s<t} B_s$
    \STATE Compute $p_t(g)$ for all $g \in C_t$ using~\eqref{eq:pt_def}
    \STATE $B_t \gets$ top-$b$ elements of $C_t$ by $p_t(g)$ (ties broken randomly)
    \STATE Evaluate $\{y_g : g \in B_t\}$
    \STATE $\mathcal{D}_{t+1} \gets \mathcal{D}_t \cup \{(g, y_g) : g \in B_t\}$
\ENDFOR
\end{algorithmic}
\end{algorithm}




\subsection{Analysis}
\label{sec:analysis}

We analyze the performance of Probability-of-Hit (PoH) introduced in Section~\ref{sec:probhit} under the experimental design setting of Section~\ref{sec:setting}. Our analysis provides high-probability guarantees for hit discovery under standard assumptions on the surrogate model and the underlying response function.

\paragraph{Probabilistic model.}
We assume that the unknown response function $f:\mathcal{G}\to\mathbb{R}$ is modeled using a probabilistic surrogate that, conditioned on the history $\mathcal{D}_t$, induces a posterior predictive distribution for each candidate $g\in\mathcal{G}$ with mean $\mu_t(g)$ and standard deviation $\sigma_t(g)$. While the analysis applies to any calibrated Bayesian predictive model, for clarity we consider Gaussian process (GP) posteriors, for which such uncertainty quantification is standard. Our analysis relies only on the following two assumptions, which are standard and hold for GPs but apply more broadly to any well-calibrated probabilistic surrogate.

\begin{assumption}[Posterior concentration]
\label{ass:gp_conf}
For any confidence level $\delta\in(0,1)$, there exists a nondecreasing sequence $\{\beta_t\}_{t\ge 1}$ such that, with probability at least $1-\delta$, the surrogate posterior satisfies
\[
|f(g) - \mu_t(g)| \le \beta_t\,\sigma_t(g),
\quad \forall g \in \mathcal{G},\ \forall t.
\]
\end{assumption}
Here $\beta_t$ is a confidence interval width multiplier used to calibrate uncertainties. Such uniform concentration bounds are well known for Gaussian processes and related Bayesian models, with $\beta_t$ typically growing as $\mathcal{O}\bigl(\sqrt{\log(t/\delta)}\bigr)$~\citep[see, e.g.,][]{srinivas2009gaussian}.

\begin{assumption}[Margin condition]
\label{ass:margin}
There exists a margin parameter $\varepsilon > 0$ such that the set
\[
\mathcal{G}^\star_\varepsilon := \bigl\{ g \in \mathcal{G} : f(g) \ge \tau + \varepsilon \bigr\}
\]
is nonempty.
\end{assumption}
This mild condition excludes degenerate cases in which all candidate perturbations lie arbitrarily close to the hit threshold, and is standard in thresholded discovery and level-set estimation problems.

\paragraph{Performance metric.}
Our primary object of interest is the number of hits discovered by PoH up to round $T$,
\[
H_T := \sum_{t=1}^T \sum_{g \in B_t} \mathbf{1}\{ f(g) > \tau \},
\]
where $B_t$ denotes the batch selected at round $t$.
We analyze $H_T$ in a high-probability sense, characterizing its dependence on the experimental budget $T$ and the batch size $b$.

\paragraph{Information gain.}
Let $k$ denote the kernel of the surrogate model and let $\lambda>0$ be the observation-noise variance.
For any set $A \subseteq \mathcal{G}$, define the mutual information between the noisy observations
$y_A := \{f(g)+\varepsilon_g : g\in A\}$ and the latent function values $f_A := \{f(g): g\in A\}$ as
\[
I(y_A ; f_A) \;=\; \frac{1}{2}\log\det\!\left(I + \lambda^{-1} K_A\right),
\]
where $K_A$ is the kernel matrix indexed by $A$.
The \emph{maximum information gain} after $n$ observations is defined as
\[
\gamma_n \;:=\; \max_{A \subseteq \mathcal{G},\, |A| = n} I(y_A ; f_A).
\]

The following theorem states our main performance guarantee for PoH under
Assumptions~\ref{ass:gp_conf} and~\ref{ass:margin}.

\begin{theorem}[Hit discovery guarantee for Probability-of-Hit]
\label{thm:poh_hits}
Consider the experimental design setting for hit discovery described in Section~\ref{sec:setting}, and the Probability-of-Hit (PoH) algorithm given in Algorithm~\ref{alg:poh}.
Under Assumptions~\ref{ass:gp_conf} and~\ref{ass:margin}, with probability at least $1-\delta$, the number $H_T$ of hits discovered by PoH up to round $T$ satisfies
\begin{align}\nonumber
&H_T
\;\ge\;
\frac{Tb}{2}
\;-\;
\sqrt{\tfrac{1}{2}Tb \log\tfrac{2}{\delta}}
\\[-1mm]\label{eq:thm_poh_batch_final}
&-\;
\mathcal{O}\!\left(
\frac{\beta_{Tb}^{3}}{\epsilon^2}\,
\sqrt{
\frac{\gamma_{Tb}}{\log(1+\lambda^{-1})}
\left(
Tb \;+\; \frac{b^2\,\gamma_T}{\log(1+\lambda^{-1})}
\right)
}
\right).
\end{align}
\end{theorem}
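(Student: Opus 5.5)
The plan is to split $H_T$ into a martingale part and a "posterior-probability deficit" part. Write $\mathcal{F}_t$ for the history up to the start of round $t$. The mismatch term $\mathbf{1}\{f(g)>\tau\} - p_t(g)$ is not literally a martingale difference, because $f$ is fixed rather than resampled. I would handle this in the Bayesian sense: under the surrogate posterior, conditional on $\mathcal{F}_t$, the indicator $\mathbf{1}\{f(g)>\tau\}$ for $g\in B_t$ has mean $p_t(g)$. This gives a bounded-difference sequence with $Tb$ terms in $[-1,1]$ after ordering the batch elements. Azuma--Hoeffding, applied with confidence $\delta/2$, then yields
\[
H_T \;\ge\; \sum_{t=1}^T\sum_{g\in B_t} p_t(g) \;-\; \sqrt{\tfrac12 Tb\log\tfrac{2}{\delta}}.
\]
The remaining $\delta/2$ is spent on Assumption~\ref{ass:gp_conf}.

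Next I would write $p_t(g)\ge \tfrac12 - (\tfrac12-p_t(g))_+$. This produces the leading $Tb/2$, and the whole task reduces to bounding the deficit $\Delta_T:=\sum_t\sum_{g\in B_t}(\tfrac12-p_t(g))_+$. A deficit occurs only when $\mu_t(g)<\tau$. By the Lipschitz bound on $\Phi$, it is at most $\frac{1}{\sqrt{2\pi}}\min\{1,(\tau-\mu_t(g))/\sigma_t(g)\}$.

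To control this I would use the greedy rule together with Assumption~\ref{ass:margin}. Let $g^\star\in\mathcal{G}^\star_\varepsilon$ be still unsampled. (I would take this from the margin set being large enough relative to $Tb$, or else treat rounds where it is exhausted separately; I expect this to need a slight strengthening or a case split in the write-up.) On the concentration event, $\mu_t(g^\star)\ge \tau+\varepsilon-\beta_t\sigma_t(g^\star)$. Since every $g\in B_t$ satisfies $p_t(g)\ge p_t(g^\star)$, I get
\[
\frac{\tau-\mu_t(g)}{\sigma_t(g)} \;\le\; \frac{\beta_t\sigma_t(g^\star)-\varepsilon}{\sigma_t(g^\star)}.
\]
So a deficit is possible only if $\sigma_t(g^\star)\gtrsim\varepsilon/\beta_t$. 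Moreover, $\tau-\mu_t(g)\le \beta_t\sigma_t(g)$ whenever $g$ is a true hit. Combining these two cases, each deficit term should be bounded by roughly $\frac{\beta_t^3}{\varepsilon^2}\,\sigma_t(g)\cdot\sigma_t(\cdot)$-type quantities, and ultimately by $\mathcal{O}\bigl(\frac{\beta_{Tb}^3}{\varepsilon^2}\sigma_t(g)\bigr)$ up to bounded factors. This conversion is where the $\beta^3/\varepsilon^2$ prefactor enters: one factor of $\beta/\varepsilon$ comes from the margin comparison, and $\beta^2/\varepsilon$ from translating the standardized gap back to a posterior width.

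Finally I would bound $\sum_t\sum_{g\in B_t}\sigma_t(g)$ by the standard information-gain argument, adapted to batches. If the posterior were updated after every point, Cauchy--Schwarz plus $\sum\sigma^2\le \frac{2\gamma_{Tb}}{\log(1+\lambda^{-1})}$ would give $\sqrt{Tb\,\gamma_{Tb}/\log(1+\lambda^{-1})}$. Within a batch, however, $\sigma_t$ is computed without the earlier in-batch points. I would compare it to the fantasized (hallucinated) standard deviation $\tilde\sigma$, which conditions on those points, via
\[
\sigma_t^2(g)\le \tilde\sigma^2(g)+\bigl(\sigma_t^2(g)-\tilde\sigma^2(g)\bigr).
\]
The correction term sums, over a round, to at most $b$ times the information gained by that batch. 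Summing over rounds then gives an extra $\frac{b^2\gamma_T}{\log(1+\lambda^{-1})}$ inside the square root, and this batch-correction step is what produces the second summand in~\eqref{eq:thm_poh_batch_final}.

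I expect the main obstacle to be the second step: making the greedy comparison with an unsampled $\varepsilon$-margin point rigorous while turning the probability deficit into a quantity linear in $\sigma_t(g)$ with the stated $\beta^3/\varepsilon^2$ dependence. There I would first try the case split on whether $\sigma_t(g)\ge\varepsilon/(2\beta_t)$, bounding the deficit by $1\le 2\beta_t\sigma_t(g)/\varepsilon$ in one case and via the margin inequality in the other.
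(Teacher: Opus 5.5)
\textbf{Verdict: there is a genuine gap, and it cannot be closed.} Your plan is the paper's proof step for step. It applies Azuma--Hoeffding to $\mathbf{1}\{f(g)>\tau\}-p_t(g)$, uses the bound $p_t(g)\ge\frac12-\frac{1}{\sqrt{2\pi}}\bigl(\frac{\tau-\mu_t(g)}{\sigma_t(g)}\bigr)_+$, compares each selected point with an unsampled $\varepsilon$-margin hit (the paper's ``mistake implies large uncertainty'' lemma), and finishes with a batched bound on $\sum_t\sum_{g\in B_t}\sigma_t(g)$.

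\textbf{Where the argument breaks.} The step you flag as the main obstacle is exactly the one the paper never derives; it just asserts that the positive-part sum is $\mathcal{O}(\beta_N^3/\epsilon^2)\sum\sigma_t(g)$. That claim is false. The comparison with $g^\star$ only gives $\frac{\tau-\mu_t(g)}{\sigma_t(g)}\le\beta_t-\frac{\varepsilon}{\sigma_t(g^\star)}$, which is a constant bound. A lower bound on $\sigma_t(g)$ is available only when $|f(g)-\tau|\ge\varepsilon$: via the comparison if $f(g)\le\tau-\varepsilon$, and directly from $\mathcal{E}_{\mathrm{conf}}$ if $f(g)\ge\tau+\varepsilon$. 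In your second case, $\sigma_t(g)<\varepsilon/(2\beta_t)$, you learn only that $f(g)>\tau-\varepsilon$. A point with $\mu_t(g)=\tau-\sigma_t(g)$ then has deficit $\frac12-\Phi(-1)$ however small $\sigma_t(g)$ is. Nothing on $\mathcal{E}_{\mathrm{conf}}$ stops it from outranking $g^\star$ once $\sigma_t(g^\star)>\varepsilon/(\beta_t-1)$. Candidates with $f(g)\in(\tau-\varepsilon,\tau]$ are simply not controlled by Assumption~\ref{ass:margin}.

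\textbf{The statement itself fails.} Take the following instance:
\begin{itemize}
\item $\mathcal{G}=A\cup\{g^\star\}$, where the $M$ points of $A$ ($M$ large) share one feature vector: $k\equiv1$ on $A\times A$, $k(A,g^\star)=0$, $k(g^\star,g^\star)=1$.
\item Prior mean $0$, surrogate noise $\lambda=1$, $b=1$, but noiseless actual observations.
\item $f\equiv\tau-\eta$ on $A$ and $f(g^\star)=\tau+\varepsilon$, with $\tau>0$.
\end{itemize}
Assumption~\ref{ass:gp_conf} holds deterministically with $\beta_t\equiv\|f\|_k\le\sqrt2(\tau+\varepsilon)$. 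Assumption~\ref{ass:margin} holds, and $\gamma_n\le\frac12\log(2(n+1))$.

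With probability $M/(M+1)$ the tie-broken first query lies in $A$. After $n\ge1$ queries there, the unsampled points of $A$ have standardized gap $(\tau+n\eta)/\sqrt{n+1}<\tau$ for all $n\le(0.4\tau/\eta)^2$. Meanwhile $g^\star$ keeps gap $\tau$. Hence $H_T=0$ for all $T\le(0.4\tau/\eta)^2$.

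The right-hand side of the theorem is $T/2-\mathcal{O}(\sqrt{T\log T})$, with constants independent of $\eta$. Letting $\eta\to0$ contradicts the theorem, even though the margin hit is never used up. In this instance $\sum_t\sigma_t(g_t)\approx2\sqrt T$, while the summed deficit $\sum_t(\frac12-p_t(g_t))_+$ is $\Theta(T)$: this is precisely your step failing.

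\textbf{Your other two concerns are not ``slight''.}
\begin{itemize}
\item In the example $\mathbf{1}\{f(g_t)>\tau\}\equiv0$ while $p_t(g_t)\ge\Phi(-\tau)$. So the Bayesian-sense martingale step is incompatible with Assumptions~\ref{ass:gp_conf}--\ref{ass:margin}, which concern one fixed $f$.
\item On a finite $\mathcal{G}$ one always has $H_T\le|\mathcal{G}^\star|$. Availability of margin hits must therefore be assumed, not derived.
\end{itemize}

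\textbf{A version that does go through.} Assume additionally that no candidate has $f(g)\in(\tau-\varepsilon,\tau]$, and that $|\mathcal{G}^\star_\varepsilon|>Tb$. The second condition guarantees some margin hit lies in $C_t\setminus B_t$, which the comparison requires. Then on $\mathcal{E}_{\mathrm{conf}}$ every selected non-hit satisfies $1\le2\beta_t\sigma_t(g)/\varepsilon$. Summing gives $Tb-H_T\le\frac{2\beta_T}{\varepsilon}\sum_t\sum_{g\in B_t}\sigma_t(g)$ directly, with no martingale step and with $Tb$ in place of $Tb/2$.

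\textbf{A secondary mismatch in the batch step.} As you describe it, the batch step yields $\sum_t\sum_{g\in B_t}\sigma_t(g)\le\sqrt{Tb\,\gamma_{Tb}\,(2/\log(1+\lambda^{-1})+C_\lambda b)}$, because Cauchy--Schwarz multiplies the in-batch correction by $Tb$. This is still $o(Tb)$ for fixed $b$, but it is not the displayed $b^2\gamma_T$ form, which the paper cites rather than proves.
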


\begin{remark}[Asymptotic implications]
For many commonly used kernels, the maximum information gain grows sublinearly.
In particular, for the squared exponential (SE) kernel in $d$ dimensions,
$\gamma_n = \mathcal{O}((\log n)^{d+1})$, and for Mat\'ern kernels with smoothness parameter
$\nu > 1$, $\gamma_n = \tilde{\mathcal{O}}(n^{\frac{d}{2\nu + d}})$.
Such bounds are well established in the GP bandits literature
\citep{vakili2021informationgain}.

Consequently, for any kernel such that $\gamma_n = o(n)$, PoH discovers at least
$\frac{Tb}{2} - o(Tb)$ hits with high probability.
\end{remark}

\paragraph{Proof sketch.}
The proof relates realized hit count $H_T$ to cumulative posterior hit probabilities via a martingale concentration argument over the $Tb$ queried perturbations.
Next, using the PoH selection rule and Assumption~\ref{ass:gp_conf}, we show that whenever a margin-hit in $\mathcal{G}^\star_\epsilon$ remains unqueried, selecting a true non-hit forces the queried perturbation to have sufficiently large posterior uncertainty, with the threshold scaling as $\sigma_t(\cdot)\gtrsim \epsilon/\beta_t$.
Finally, we control the resulting cumulative uncertainty cost under delayed (batch) feedback by bounding $\sum_{t=1}^T\sum_{g\in B_t}\sigma_t(g)$ using a batched information-gain argument (Lemma~\ref{lem:batch_sigma_sum}).
Combining these ingredients yields~\eqref{eq:thm_poh_batch_final}.
A complete proof is provided in Appendix~\ref{app:proofs}.

\begin{table*}[h!]
\caption{Dataset complexity statistics with arrows indicating whether larger ($\uparrow$) or smaller ($\downarrow$) values correspond to increased complexity.
Local Smoothness ($\mathcal{S}$) is the average absolute difference in target values between neighboring points in feature space.
Hit Cluster Count ($n_{\text{clusters}}$) and Hit Spread ($\sigma_{\text{hits}}$) denote the number of spatially distinct clusters of hit points and standard deviation of pairwise distances between them. 
Feature-Response Correlation ($\rho_{d,y}$) captures correlation between pairwise distances and differences in target values.}
\label{tab:datasets_summary}
\centering
\setlength{\tabcolsep}{3pt}
\begin{subtable}[t]{0.49\textwidth}
\centering
\caption{Synthetic Datasets}
\label{tab:datasets_synth}
\small
\begin{tabular}{lcccc}
\toprule
Dataset
& $\mathcal{S}\!\downarrow$
& $n_{\mathrm{clusters}}\!\uparrow$
& $\sigma_{\mathrm{hits}}\!\uparrow$
& $\rho_{d,y}\!\uparrow$ \\
\midrule
Sine (1D)                   & 0.062 & 1.0 & 0.171 & 0.661 \\
Sine (2D)                   & 0.108 & 2.0 & 1.476 & 0.001 \\
Branin-Hoo (2D)             & 0.057 & 2.5 & 1.811 & 0.257 \\
Perturbation-Pathways (4D)  & 0.419 & 2.4 & 2.522 & 0.056 \\
Perturbation-SEM (6D)       & 0.750 & 1.2 & 2.871 & 0.054 \\
\bottomrule
\end{tabular}
\end{subtable}
\begin{subtable}[t]{0.49\textwidth}
\centering
\caption{Real Datasets}
\label{tab:datasets_real}
\small
\begin{tabular}{lcccc}
\toprule
Dataset
& $\mathcal{S}\!\downarrow$
& $n_{\mathrm{clusters}}\!\uparrow$
& $\sigma_{\mathrm{hits}}\!\uparrow$
& $\rho_{d,y}\!\uparrow$ \\
\midrule
Schmidt IFN-$\gamma$ & 0.22 & 12.7 & 0.044 & 0.010 \\
Schmidt IL-2         & 0.21 & 17.3 & 0.104 & 0.048 \\
Sanchez Tau          & 0.67 & 13.0 & 0.060 & 0.031 \\
Zhu SARS-CoV-2       & 1.00 & 9.7  & 0.134 & 0.059 \\
Zhuang NK            & 1.76 & 13.0 & 0.042 & 0.006 \\
\bottomrule
\end{tabular}
\end{subtable}
\vskip -0.1in
\end{table*}
\section{Experiments}\label{sec:experiments}
In this section, we describe our extensive empirical evaluation of the experiment design protocol with a view to answering the following questions: \underline{\textit{RQ1}}: How does dataset complexity impact method performance and experiment parameters? \underline{\textit{RQ2}}: Can hit-aware acquisition functions improve discovery of high-value targets compared to state-of-the-art baselines? \underline{\textit{RQ3}}: Do the proposed methods generalize across diverse biological assays?

\subsection{Datasets}\label{sec:experiments:datasets}

\paragraph{Synthetic Datasets.}
We employ three benchmark synthetic datasets namely (a) \textbf{Sine (1D)}: one-dimensional sinusoidal function with additive Gaussian noise and a single global maximum, (b) \textbf{Sine (2D)}: weighted combination of two sinusoidal components with learnable structure, and (c) \textbf{Branin-Hoo (2D)}\footnote{\url{https://www.sfu.ca/~ssurjano/branin.html}}: classical Bayesian optimization dataset inverted to yield three global maxima (originally minima).
Additionally, we design two novel synthetic datasets that more closely reflect molecular biology.
\begin{itemize}[leftmargin=*,nosep]
\item \textbf{Perturbation-Pathways (4D)}: simulated pathway where genes with related functions are clustered into modules. 
The key insight is that perturbing \emph{any} gene in a given pathway produces a strong phenotypic response while Gaussian activation creates learnable spatial structures.

\item \textbf{Perturbation-SEM (6D)}: simulated single-cell perturbation with explicit causal structure. 
capturing phenomena where similar perturbations can have opposite effects under different cellular and environmental contexts.
\end{itemize}

\paragraph{Real-World Datasets.} 
We evaluate our protocol on five immunology datasets from the GeneDisco benchmark ~\citep{mehrjou2021genedisco}.
Schmidt-IFN$\gamma$ and Schmidt-IL2~\citep{schmidt2021crispr} measure changes in read-counts of Interferon-$\gamma$ and Interleukin-2 cytokine secretions in T cells. 
Sanchez-Tau~\citep{sanchez2021genome} screens for modulators of tau phosphorylation in human iPSC-derived neurons. 
Zhu-SARS-CoV-2~\citep{zhu2021genome} identifies host factors required for SARS-CoV-2 infection.
Zhuang-NK~\citep{zhuang2019genome} screens genes regulating sensitivity of leukemia cells to cytotoxic activity of natural killer (NK) cells.
Each dataset contains $\sim$18,000 genes with continuous phenotypic readouts. We represent genes using Achilles~\citep{dempster2019extracting}, a set of 808-dimensional feature vectors.

Table~\ref{tab:datasets_summary} summarizes dataset complexity scores.
We include complete details required for constructing the synthetic datasets and complexity metrics in Appendix~\ref{app:datasets}.

\begin{figure*}[htbp]
    \centering
    \includegraphics[width=\textwidth]{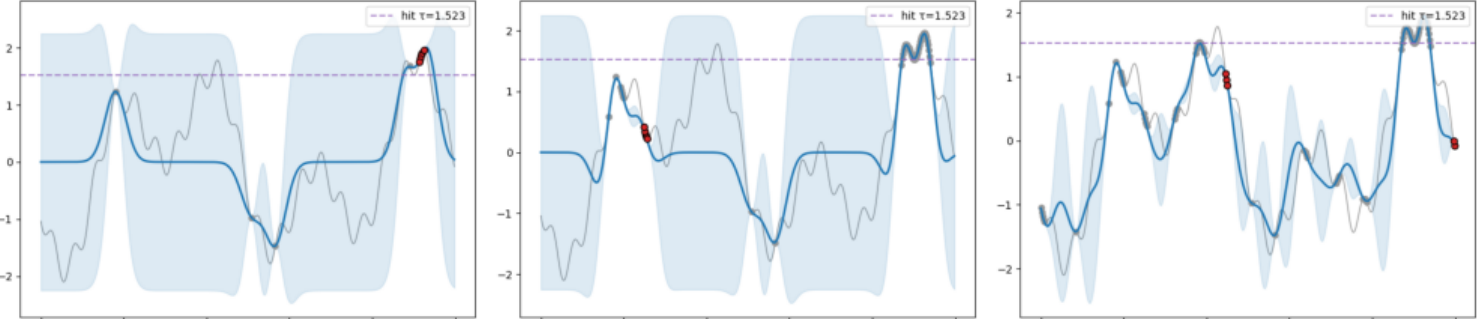}
    \caption{Illustration of the learning process underlying Probability-of-Hit. The solid grey line represents the ground-truth distribution, the dotted grey line represents $\tau=10\%$. The solid blue line and the shaded blue region around it represent the current surrogate and the model’s uncertainty with the red points denoting the samples chosen in each cycle.}
    \label{fig:sine_probability_hit}
\end{figure*}

\begin{table*}[ht]
\footnotesize
\centering
\setlength\tabcolsep{2pt}
\caption{Performance of various acquisition functions on synthetic datasets measured via mean and standard deviation of cumulative hit ratio at $\tau=10\%$ for $b=5$ after the 5th and 10th cycles, averaged across 20 seeds. Values highlighted in blue indicate best performance.}
\label{tab:main_synthetic_results}
\begin{tabular}{lcccccccc}
\toprule
\multirow{2}[2]{*}{\textbf{Acquisition}} & \multicolumn{2}{c}{\textbf{Sine (2D)}} & \multicolumn{2}{c}{\textbf{Branin-Hoo (2D)}} & \multicolumn{2}{c}{\textbf{Perturbation-Pathways (4D)}} & \multicolumn{2}{c}{\textbf{Perturbation-SEM (6D)}} \\
\cmidrule(lr){2-3} \cmidrule(lr){4-5} \cmidrule(lr){6-7} \cmidrule(lr){8-9}
& $t = 5$ & $t = 10$ & $t = 5$ & $t = 10$ & $t = 5$ & $t = 10$ & $t = 5$ & $t = 10$ \\
\midrule
Random & $0.1 \pm 0.04$ & $0.14 \pm 0.04$ & $0.11 \pm 0.04$ & $0.15 \pm 0.04$ & $0.09 \pm 0.04$ & $0.13 \pm 0.05$ & $0.12 \pm 0.04$ & $0.16 \pm 0.05$ \\
Top-K Greedy & \cellcolor{blue!15} $0.57 \pm 0.06$ & $0.84 \pm 0.15$ & $0.34 \pm 0.06$ & $0.49 \pm 0.06$ & $0.42 \pm 0.13$ & $0.58 \pm 0.16$ & \cellcolor{blue!15} $0.32 \pm 0.09$ & $0.46 \pm 0.13$ \\
Thompson Sampling & $0.54 \pm 0.06$ & $0.83 \pm 0.05$ & $0.28 \pm 0.06$ & $0.45 \pm 0.08$ & $0.37 \pm 0.08$ & $0.58 \pm 0.09$ & $0.18 \pm 0.05$ & $0.3 \pm 0.07$ \\
Thompson-Hit & $0.46 \pm 0.07$ & $0.76 \pm 0.05$ & $0.23 \pm 0.05$ & $0.37 \pm 0.07$ & $0.29 \pm 0.07$ & $0.49 \pm 0.09$ & $0.15 \pm 0.04$ & $0.23 \pm 0.08$ \\
Probability-of-Hit & \cellcolor{blue!15} $0.57 \pm 0.06$ & \cellcolor{blue!15} $0.88 \pm 0.05$ & \cellcolor{blue!15} $0.35 \pm 0.06$ & \cellcolor{blue!15} $0.5 \pm 0.07$ & \cellcolor{blue!15} $0.44 \pm 0.13$ & \cellcolor{blue!15} $0.62 \pm 0.14$ & \cellcolor{blue!15} $0.32 \pm 0.09$ & \cellcolor{blue!15} $0.49 \pm 0.12$ \\
\bottomrule
\end{tabular}
\end{table*}

\subsection{Experimental Setup}

\paragraph{Acquisition Functions.}
We compare Probability-of-Hit against state-of-the-art acquisition functions across four categories from the DiscoBAX benchmark~\citep{lyle2023discobax}:
\begin{itemize}[leftmargin=*,nosep]
    \item \textbf{Random}: Naive baseline that uniformly samples candidates without replacement.
    \item \textbf{Top-K Greedy}~\cite{neiswanger2021bayesian}: Active learning algorithm that selects candidates with highest predicted mean $\mu(x)$.
    \item \textbf{Thompson Sampling}~\citep{chowdhury2017kernelized}: Bandit algorithm that samples from a learned posterior distribution and selects candidates with highest values.
    \item \textbf{DiscoBAX}~\citep{lyle2023discobax}: Bayesian algorithm execution that selects subsets to maximize information gain.\footnote{We report results for DiscoBAX only on a small synthetic experiment due to large computational requirements and relatively poor performance (Appendix \ref{app:discobax}).}
\end{itemize}
Additionally, we implement \textbf{Thompson-Hit} (cf. Appendix~\ref{app:thompson_hit} for the pseudocode) as a natural variant of Thompson Sampling modified for hit-discovery.

\paragraph{Model.}
We pair all acquisition functions with a Gaussian Process model with RBF kernel for synthetic datasets and a Bayesian MLP model equipped with Monte Carlo Dropout~\citep{gal2016dropout} for uncertainty quantification for real datasets. The latter consists of two fully-connected hidden layers of dimensions 128 and 64 each with ReLU activations and dropout probability $p=0.2$. We use 50 Monte Carlo samples for posterior approximation and train for 100 epochs with early stopping patience 10.

\paragraph{Active Learning Protocol.} 
We vary batch size $b \in \braces{2, 5, 10}$ for synthetic data, and $b = 200$ for real data. 
We choose thresholds $\tau \in \braces{5\%, 10\%, 20\%}$.
Each experiment is conducted over $T=10$ cycles.
Within each cycle $t \in T$, first, the model is trained on the available labeled pool of genes. 
Next, the acquisition function selects $b$ candidate genes from the unlabeled pool based on the updated model's predictions. 
Finally, true labels are revealed for the candidates and the training set is updated.

\paragraph{Metrics.}
We define two metrics for evaluating acquisition function and model performance:

\begin{itemize}[leftmargin=*,nosep]
\item \textbf{Threshold Cumulative Hit Rate} (CHR@$\tau$): true hits recovered after $T$ cycles where hits are candidate genes in the top $\tau$ fraction by ground-truth value.
\item \textbf{Symmetric mean absolute percentage error} (SMAPE): relative error between predicted and actual values with under and over estimates treated equally.
\end{itemize}
CHR@$\tau$ captures the goal of maximizing discoveries under a fixed experimental budget while SMAPE measures the degree of accurate exploration of the perturbation space.






\begin{table*}[htbp]
\centering
\caption{CHR@10 for different acquisition functions on the real-world datasets. Experiments are conducted across 10 cycles of $b=200$ and averaged across 5 seeds. Values in \textbf{bold} indicate the best-performing method per row (dataset).}
\label{tab:full_results_10cycles_best_features}
\footnotesize
\resizebox{\textwidth}{!}{%
\begin{tabular}{llccccc}
\toprule
\textbf{Dataset} & \textbf{Probability-of-Hit} & \textbf{Thompson} & \textbf{Thompson-Hit} & \textbf{Top-K Greedy} & \textbf{Random} \\
\midrule
Schmidt IFN-$\gamma$ & \textbf{337.2$\pm$48} & 326.0$\pm$33 & 308.0$\pm$64 & 315.6$\pm$55 & 213.0$\pm$10 \\
Schmidt IL-2 & \textbf{575.4$\pm$17} & 521.4$\pm$64 & 538.2$\pm$36 & 516.6$\pm$60 & 221.2$\pm$9 \\
Sanchez Tau  & \textbf{389.4$\pm$11} & 377.4$\pm$13 & 375.2$\pm$16 & 376.0$\pm$13 & 213.0$\pm$10 \\
Zhu SARS-CoV-2 & 251.6$\pm$9 & 261.0$\pm$9 & \textbf{261.2$\pm$6} & 252.8$\pm$28 & 219.4$\pm$11 \\
Zhuang NK   &  246.0$\pm$28 & 251.6$\pm$19 & \textbf{252.4$\pm$19} & 245.2$\pm$15 & 215.4$\pm$23 \\
\bottomrule
\end{tabular}
}
\end{table*}

\subsection{Results}
\label{sec:results_on_synthetic}


Figure~\ref{fig:sine_probability_hit} illustrates the learning process for Probability-of-Hit. With increasing cycles, it converges to the ground-truth distribution with high confidence and locates hits.


\paragraph{Probability-of-Hit consistently outperforms baselines.}
Table~\ref{tab:main_synthetic_results} reports CHR@10 for $b=5$ after 5th and 10th cycles for all baselines. 
Across all datasets, cycles, and 20 random seeds, Probability-of-Hit significantly outperforms all baselines (paired Wilcoxon tests, $p \le 2.3\times10^{-4}$), with large effect sizes relative to Random ($\delta=0.97$), Thompson-Hit ($\delta=0.88$), Thompson ($\delta=0.76$), and moderate but consistent gains over Top-K Greedy ($\delta=0.41$).

\begin{figure}[h]
    \centering
    \includegraphics[width=0.9\columnwidth]{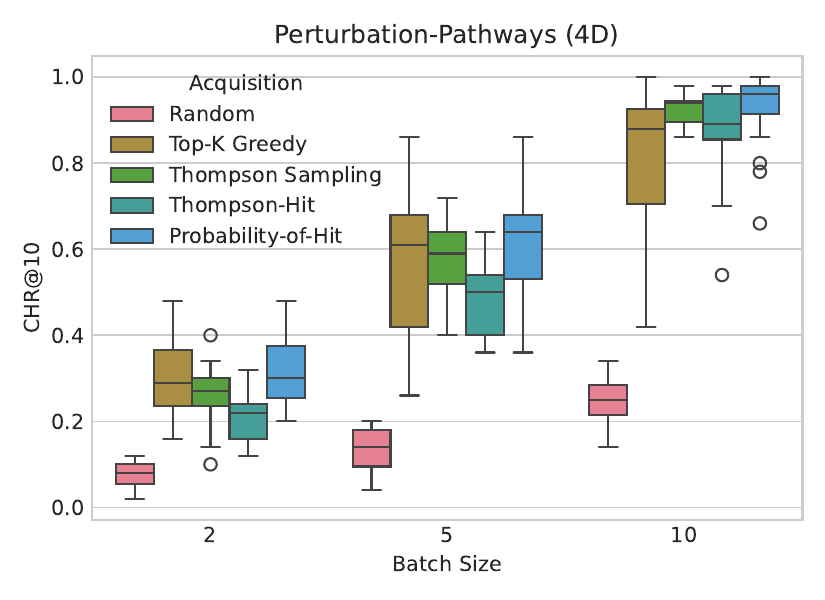}
    \caption{Impact of batch size ($b$) on CHR@10 for Perturbation-Pathways (4D). Probability-of-Hit outperforms all baselines.}
    \label{fig:pert_path_batch_size}
\end{figure}


\paragraph{Sensitivity analysis.}
Figure~\ref{fig:pert_path_batch_size} shows a 4-fold improvement from $b=2$ to $b=10$ on the Perturbation-Pathways (4D) dataset.
In contrast, the effect of finding between top $\tau = 5\%$ and $\tau = 20\%$ performers is limited once the model has sufficient training data.
This has important practical implications: increasing the number of experiments per cycle provides substantially more value than refining hit threshold.

\begin{figure}[h]
    \centering
    \includegraphics[width=0.85\columnwidth]{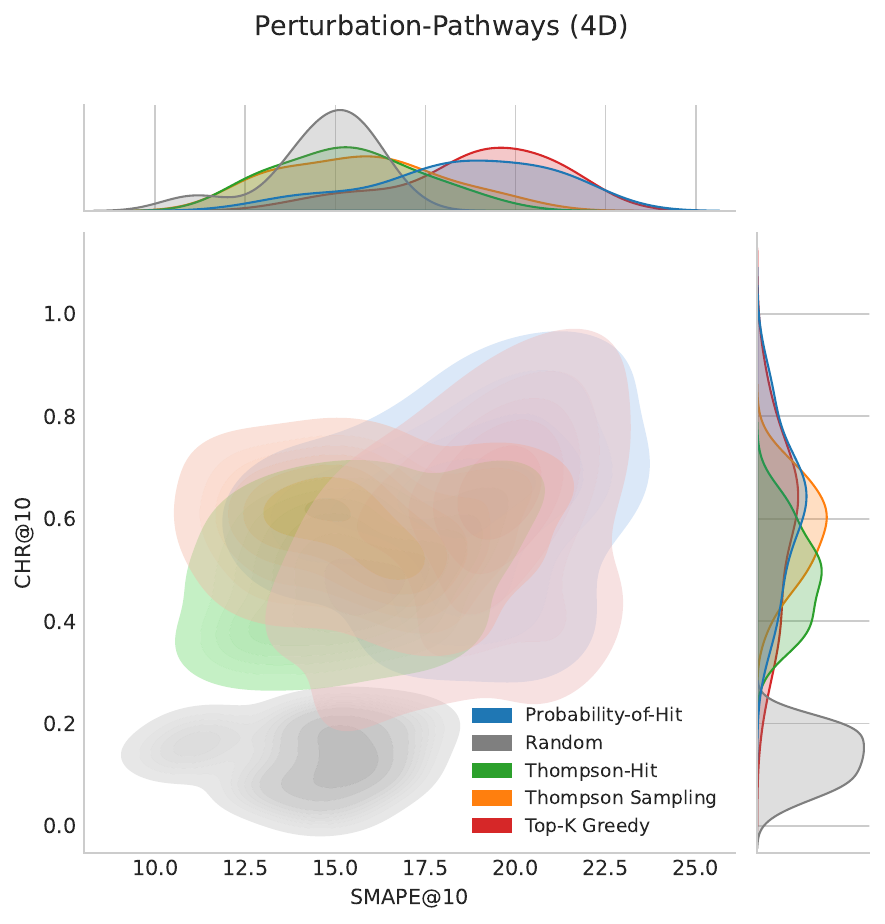}
    \caption{Trade-off at $b=5$ between CHR@10 and SMAPE@10 on Perturbation-Pathways (4D) across acquisition functions.}
    \label{fig:pert_path_tradeoff}
\end{figure}

\paragraph{Exploration-Exploitation Trade-off.}
Figure~\ref{fig:pert_path_tradeoff} reveals a trade-off: methods that achieve high hit recovery (Probability-of-Hit, Top-K Greedy) exhibit higher SMAPE, while Random achieves the lowest SMAPE, but poor hit recovery. This reflects the fundamental tension between building an accurate global surrogate model (\emph{exploration}) and focusing on promising regions (\emph{exploitation}). Thompson-Hit offers an alternative balance point with lower SMAPE at the cost of reduced hit recovery.

\paragraph{Hit-Recovery based methods outperform baselines on real datasets.}
Table~\ref{tab:full_results_10cycles_best_features} reports total hits recovered after 10 cycles for $b=200$ and $\tau=10\%$ across acquisition functions. Results are averaged over 5 seeds. 
Probability-of-Hit significantly outperforms Random (Cliff' $\delta = 0.894$, paired $t$-test $p < 0.0001$, Wilcoxon signed-rank $p < 0.0001$), recovering, on average, 107 more hits after 10 cycles. 

Probability-of-Hit displays the largest advantage on Schmidt IL-2. This dataset has the lowest local smoothness ($\mathcal{S} = 0.21$) in our complexity analysis, suggesting Probability-Hit excels when the phenotype landscape is learnable.
However, Thompson-Hit displays best performance on Zhuang NK which has the weakest feature-phenotype correlation and more fragmented hit structures indicating that increased exploration is crucial. 

Due to space constraints, we defer additional results on sensitivity analysis, tradeoffs, and hit recovery across cycles to Appendix~\ref{app:additional_results}.

\section{Conclusion}\label{sec:conclusions}

We study \emph{hit discovery} as a principled objective for closed-loop perturbation experiments, motivated by the need to identify high-effect perturbations under a stringent experimental budget. We show that commonly used exploration- and optimisation-driven acquisition strategies are misaligned with this goal, particularly in multimodal and heterogeneous response landscapes typical of biological systems.

To address this gap, we propose \emph{Probability-of-Hit} to directly target threshold exceedance by ranking candidates according to their posterior probability of being a hit. We provide theoretical guarantees that establish asymptotic optimality under standard uncertainty calibration assumptions and demonstrate consistent empirical gains across a diverse suite of synthetic benchmarks and large-scale CRISPR screening datasets. Our results show that explicitly aligning acquisition with the hit discovery objective leads to improved recovery of biologically meaningful perturbations without requiring additional model complexity. By reframing perturbation screening as a many-needle search problem rather than a single-optimum optimisation task, we provide a foundation for more efficient and biologically relevant experimental strategies in functional genomics and beyond. 

\section{Limitations and Future Work}\label{sec:limitations}

This work has some limitations that suggest promising directions for future research. Our analysis assumes access to a well-calibrated probabilistic surrogate model. Although Probability-of-Hit is model-agnostic, its performance depends on the quality of posterior uncertainty estimates. Improving calibration for deep- or large-scale surrogates, particularly in sparse-data regimes, remains an important challenge.
While we consider batch selection, we do not explicitly optimise diversity within a batch beyond what is induced by posterior uncertainty. Incorporating explicit diversity constraints, structured batch construction, or combinatorial interactions between perturbations may further improve performance in high-throughput experimental settings.
Finally, we primarily study single-perturbation screens. Extending hit-aware acquisition to combinatorial perturbations, conditional contexts (e.g., cell types or environmental states), and multi-objective biological readouts represents an exciting avenue for future work.


\section*{Impact Statement}\label{sec:impact}

In this paper, we present Probability-Hit, a sampling method that improves the efficiency of experimental design in high-throughput perturbation studies. By enabling faster and more reliable discovery of high-effect perturbations, our approach has the potential to reduce experimental cost and accelerate biological discovery in areas such as functional genomics, drug target discovery, and cellular engineering.

This method is a general-purpose algorithmic tool and does not raise immediate ethical concerns beyond those already present in large-scale biological experimentation. As with any technology that accelerates biological discovery, downstream applications should be governed by appropriate experimental oversight, data governance, and ethical review. We do not anticipate negative social impacts that could arise directly from this work.


\bibliography{references}
\bibliographystyle{icml2026}


\newpage
\appendix
\onecolumn

\section{Additional Related Work}\label{app:related_work}


\paragraph{Level-set estimation and thresholded objectives.}
Hit discovery is closely related to level-set estimation and contour-finding problems, where the goal is to identify the subset of inputs whose function value exceeds a given threshold. Prior work has proposed acquisition functions that minimize classification error of the super-level set or control false discovery rates. While these approaches provide theoretical guarantees in certain settings, they are often formulated for sequential (single-point) selection and become computationally expensive or brittle in large-scale or batch settings. Moreover, many level-set methods focus on accurate boundary estimation rather than maximizing the number of recovered hits under a fixed budget, which is the primary objective in experimental screening contexts.

\section{Proof of Theorem~\ref{thm:poh_hits}}\label{app:proofs}
\label{app:proofs}

We prove Theorem~\ref{thm:poh_hits} under the batch feedback protocol of Section~\ref{sec:setting}.
Let $N:=Tb$ denote the total number of queried perturbations, and write each batch as
$B_t=\{g_{t,1},\dots,g_{t,b}\}$.
Recall that $p_t(g)=\mathbb{P}(f(g)>\tau\mid \mathcal{D}_t)$ and that PoH selects $B_t$ as the top-$b$ elements of $C_t$
ranked by $p_t(\cdot)$.

\paragraph{Step 0: Events.}
Let $\mathcal{E}_{\mathrm{conf}}$ be the event in Assumption~\ref{ass:gp_conf} (with confidence level $\delta/2$), i.e.,
\begin{equation}
\label{eq:conf_event_app}
\mathcal{E}_{\mathrm{conf}}
:=\left\{
|f(g)-\mu_t(g)| \le \beta_t \sigma_t(g),\ \forall g\in\mathcal{G},\ \forall t\le T
\right\},
\qquad
\mathbb{P}(\mathcal{E}_{\mathrm{conf}})\ge 1-\delta/2.
\end{equation}

\subsubsection*{A.1. Concentration of realized hits around posterior hit probabilities}

Define the hit indicators within a batch:
\[
X_{t,i} := \mathbf{1}\{ f(g_{t,i})>\tau\},
\qquad
H_T=\sum_{t=1}^T\sum_{i=1}^b X_{t,i}.
\]
Let $\mathcal{F}_t:=\sigma(\mathcal{D}_t)$ be the sigma-field generated by the information available at the start of round $t$.
Since all $b$ selections in batch $t$ are made based on $\mathcal{D}_t$, we have
\[
\mathbb{E}[X_{t,i}\mid \mathcal{F}_t] = p_t(g_{t,i}),
\qquad \forall t\in[T],\ i\in[b].
\]

\begin{lemma}[Hit-count concentration]
\label{lem:hit_concentration}
For any $\delta\in(0,1)$, with probability at least $1-\delta/2$,
\begin{equation}
\label{eq:hit_conc_app}
H_T
\;\ge\;
\sum_{t=1}^T\sum_{i=1}^b p_t(g_{t,i})
\;-\;
\sqrt{\tfrac{1}{2}N \log\tfrac{4}{\delta}}.
\end{equation}
\end{lemma}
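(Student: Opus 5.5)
The plan is an Azuma--Hoeffding argument on a martingale difference sequence indexed by the $N=Tb$ queried perturbations. Within a batch the indicators $X_{t,1},\dots,X_{t,b}$ are conditionally dependent given $\mathcal{F}_t$, since their $f$-values are correlated under the posterior. I therefore serialize them: order the pairs $(t,i)$ lexicographically and set
\[
Z_{t,i} := X_{t,i} - p_t(g_{t,i}).
\]
The filtration I would use is $\mathcal{F}_{t,i}$, generated by $\mathcal{D}_t$ together with $f(g_{t,1}),\dots,f(g_{t,i})$ (and any tie-breaking randomness). This is not quite right, because conditioning on earlier in-batch hit indicators would change the conditional mean of $X_{t,i}$ away from $p_t(g_{t,i})$.

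The cleaner route is to aggregate per round. Define
\[
S_t := \sum_{i=1}^b \bigl(X_{t,i}-p_t(g_{t,i})\bigr).
\]
The batch $B_t$ is $\mathcal{F}_t$-measurable, up to independent tie-breaking randomness that I include in $\mathcal{F}_t$. Hence $\mathbb{E}[S_t\mid\mathcal{F}_t]=0$ by the displayed identity $\mathbb{E}[X_{t,i}\mid\mathcal{F}_t]=p_t(g_{t,i})$. Also $S_t$ is $\mathcal{F}_{t+1}$-measurable, provided we interpret $\mathcal{F}_{t+1}$ as containing enough information to determine the hit indicators. In the Bayesian model $f$ is random, so I would enlarge $\mathcal{F}_{t+1}$ to include $f(B_s)$ for $s\le t$. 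The key identity then still holds, because the noise is independent and, conditional on $f(B_{<t})$ and $\mathcal{D}_t$, the posterior of $f(g)$ for unqueried $g$ is unchanged only if we condition on the noiseless values. This subtlety is the main obstacle. The simplest fix is to use the filtration $\mathcal{G}_t:=\sigma(\mathcal{D}_t)$ and bound via the Doob decomposition. Since $S_t$ is not $\mathcal{G}_{t+1}$-measurable, I would instead apply the tower property to the conditional moment generating function. Concretely, $\mathbb{E}[e^{-\eta\sum_{s\le t}S_s}]$ is handled by peeling the last round, conditioning on the full history $\mathcal{H}_t\supseteq\mathcal{F}_t$ of all earlier $f$-values and noises. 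What is then needed is $\mathbb{E}[X_{t,i}\mid\mathcal{H}_t]=p_t(g_{t,i})$. I would take this as the modelling convention implicit in the paper's displayed identity, with the paper's $\mathcal{F}_t$ understood as that history.

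Granting the martingale-difference property with respect to this filtration, each $S_t$ lies in an interval of length $b$, since each summand lies in $[-p,1-p]$. Applying Hoeffding's lemma per round gives $\mathbb{E}[e^{-\eta S_t}\mid\mathcal{H}_t]\le e^{\eta^2 b^2/8}$. Iterating yields the tail bound $\exp(-2u^2/(Tb^2))$. This is weaker than claimed by a factor $b$. To get the stated $\sqrt{\tfrac12 N\log\tfrac4\delta}$, I would instead work at the level of individual queries. The lexicographic sequence $Z_{t,i}$ has each increment in an interval of length $1$. The needed property is $\mathbb{E}[Z_{t,i}\mid \mathcal{H}_t, X_{t,1},\dots,X_{t,i-1}]=0$, which I would justify by treating the within-batch order as a uniformly random permutation drawn independently. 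Failing that, I would adopt it as part of the calibration/modelling assumption that makes the displayed conditional expectation hold query by query.

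Under this setup, Azuma--Hoeffding for $N$ increments of range $1$ gives
\[
\mathbb{P}\Bigl(\textstyle\sum_{t,i} Z_{t,i}\le -u\Bigr)\le \exp(-2u^2/N).
\]
Setting the right-hand side equal to $\delta/2$ gives $u=\sqrt{\tfrac12 N\log\tfrac{4}{\delta}}$, which is exactly \eqref{eq:hit_conc_app}. The step I expect to be delicate is justifying the per-query martingale property inside a batch. The tail computation itself is routine.
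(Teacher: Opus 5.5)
You follow the paper's route exactly: serialize the $N=Tb$ queries in round order and apply Azuma--Hoeffding to the increments $X_{t,i}-p_t(g_{t,i})$, each of range $1$. The paper's proof simply asserts that $\sum_{m\le n}(X_m-\mathbb{E}[X_m\mid\mathcal{F}_{t(m)}])$ is a martingale. You correctly isolate this as the crux, but then you assume it instead of proving it, and neither justification you offer works.

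A uniformly random within-batch order does not give $\mathbb{E}[X_{t,i}\mid\mathcal{H}_t,X_{t,1},\dots,X_{t,i-1}]=p_t(g_{t,i})$. Whenever $f(g_{t,1})$ and $f(g_{t,2})$ are correlated under the posterior, learning $X_{t,1}$ shifts the conditional law of $X_{t,2}$, whatever the order. This is the usual situation, since the top-$b$ points by $p_t$ often lie close together.

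Enlarging the filtration to $\mathcal{H}_t$ (past noiseless $f$-values) destroys $\mathbb{E}[X_{t,i}\mid\mathcal{H}_t]=p_t(g_{t,i})$, because $p_t$ conditions only on the noisy data $\mathcal{D}_t$. If you keep $\mathcal{F}_t=\sigma(\mathcal{D}_t)$ instead, then $X_{t,i}$ is not $\mathcal{F}_{t+1}$-measurable, since only the noisy $y_g$ are observed, never the hits. So no filtration makes your increments both adapted and conditionally centred. Declaring the property a modelling convention amounts to assuming the lemma, and the assumption fails for GP priors that correlate the queried points.

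The gap cannot be closed, because the inequality is false without further structure. Take the constant kernel $k\equiv 1$ (or an SE kernel with very long lengthscale), so that $f\equiv c$ with $c\sim\mathcal{N}(0,1)$, and let $\tau=0$, $T=1$. Then $p_1\equiv\tfrac{1}{2}$ and $H_1=b\,\mathbf{1}\{c>0\}$. With probability $\tfrac{1}{2}$ one has $H_1-\sum_i p_1(g_{1,i})=-\tfrac{b}{2}$, which violates the claimed bound as soon as $b>2\log\tfrac{4}{\delta}$. Yet the lemma asserts the bound with probability $1-\tfrac{\delta}{2}>\tfrac{1}{2}$.

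With the same prior, $b=1$ and noise variance $\lambda\to\infty$, every $p_t$ stays near $\tfrac{1}{2}$ while all $X_t$ coincide. This gives a deviation close to $\tfrac{T}{2}$ with probability close to $\tfrac{1}{2}$, not $O(\sqrt{T})$. It also rules out your round-aggregated bound $\exp(-2u^2/(Tb^2))$.

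A genuine martingale argument does give concentration of $\sum_{t,i}\mathbb{E}[X_{t,i}\mid\mathcal{F}_{t+1}]$ (not $H_T$) around $\sum_{t,i}p_t(g_{t,i})$. The round increments are $\mathcal{F}_{t+1}$-measurable, centred given $\mathcal{F}_t$, and of range $b$. This yields deviation $b\sqrt{\tfrac{T}{2}\log\tfrac{2}{\delta}}$ at confidence $1-\tfrac{\delta}{2}$. A bound on $H_T$ itself needs a changed model, e.g.\ observed hit indicators with $p_t$ conditioned on them. Even then the per-round range is $b$, not $1$.
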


\begin{proof}
Order the $N$ queried points in any fixed order that respects rounds, e.g.,
$n=(t-1)b+i$ corresponds to $(t,i)$, and define a martingale with bounded increments
$\sum_{m=1}^n (X_m-\mathbb{E}[X_m\mid \mathcal{F}_{t(m)}])$.
Azuma--Hoeffding yields \eqref{eq:hit_conc_app}.
\end{proof}

\subsubsection*{A.2. A lower bound on $\sum p_t(g)$ via PoH and the margin}

\begin{lemma}[One-sided CDF bound]
\label{lem:cdf_lip}
Let $p(z):=\Phi(z)$. For all $z\in\mathbb{R}$,
\begin{equation}
\label{eq:cdf_lip_app}
\Phi(z) \;\ge\; \frac{1}{2} - \frac{1}{\sqrt{2\pi}}\,(-z)_+.
\end{equation}
Equivalently, if $p_t(g)=\Phi((\mu_t(g)-\tau)/\sigma_t(g))$, then
\begin{equation}
\label{eq:pt_lb_app}
p_t(g)
\;\ge\;
\frac{1}{2}
-\frac{1}{\sqrt{2\pi}}
\left(\frac{\tau-\mu_t(g)}{\sigma_t(g)}\right)_+.
\end{equation}
\end{lemma}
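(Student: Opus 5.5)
The plan is to split on the sign of $z$ and, for $z<0$, use that the standard normal density is bounded by $\varphi(0)=1/\sqrt{2\pi}$.

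\emph{Case $z\ge 0$.} Here $(-z)_+=0$, so the claim reduces to $\Phi(z)\ge \tfrac12$. This holds because $\Phi$ is nondecreasing and $\Phi(0)=\tfrac12$ by symmetry of the standard normal.

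\emph{Case $z<0$.} Write
\[
\Phi(z)=\tfrac12-\int_{z}^{0}\varphi(s)\,ds,
\]
where $\varphi$ is the standard normal density. Since $\varphi(s)\le \varphi(0)=1/\sqrt{2\pi}$ for all $s$, the integral is at most $(0-z)/\sqrt{2\pi}=(-z)_+/\sqrt{2\pi}$. This gives \eqref{eq:cdf_lip_app}. Equivalently, $\Phi$ is $1/\sqrt{2\pi}$-Lipschitz, and we apply this between $z$ and $0$ in one direction only.

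\emph{Translating to $p_t(g)$.} By \eqref{eq:pt_gauss} and the symmetry $1-\Phi(x)=\Phi(-x)$, we have $p_t(g)=\Phi(z)$ with $z=(\mu_t(g)-\tau)/\sigma_t(g)$, assuming $\sigma_t(g)>0$. Substituting $-z=(\tau-\mu_t(g))/\sigma_t(g)$ into \eqref{eq:cdf_lip_app} gives \eqref{eq:pt_lb_app}.

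The only possible snag is the degenerate case $\sigma_t(g)=0$, which is not really an obstacle. There the bound should be read with the convention that the ratio is $+\infty$ when $\mu_t(g)<\tau$, which makes the inequality trivial. Alternatively, one can restrict to $\sigma_t(g)>0$, as is implicit in the closed form \eqref{eq:pt_gauss}.
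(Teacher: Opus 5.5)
Your proof is correct and follows the paper's argument: $\Phi(0)=\tfrac12$ with monotonicity handles $z\ge 0$, and for $z<0$ the density bound $\varphi\le\varphi(0)=1/\sqrt{2\pi}$ gives the result (you apply it through $\int_z^0\varphi$, the paper through $\Phi'\le\phi(0)$), followed by the substitution $z=(\mu_t(g)-\tau)/\sigma_t(g)$. Your explicit use of $1-\Phi(x)=\Phi(-x)$ and your restriction to $\sigma_t(g)>0$ are small clarifications the paper leaves implicit; the restriction is the safer of your two readings of the degenerate case, since for $\sigma_t(g)=0$ and $\mu_t(g)=\tau$ the ratio is undefined while $p_t(g)=0$.
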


\begin{proof}
Since $\Phi(0)=1/2$ and $\Phi'(z)=\phi(z)\le \phi(0)=1/\sqrt{2\pi}$, we have for $z\le 0$ that
$\Phi(z)\ge \Phi(0)-\phi(0)(-z)$, while for $z\ge 0$ the bound is trivial.
Substituting $z=(\mu_t(g)-\tau)/\sigma_t(g)$ yields \eqref{eq:pt_lb_app}.
\end{proof}

\begin{lemma}[Mistake implies large uncertainty]
\label{lem:mistake_large_sigma}
Fix a round $t$ and suppose there exists $g^\star\in C_t$ with $f(g^\star)\ge \tau+\epsilon$.
Assume $\mathcal{E}_{\mathrm{conf}}$ holds.
If PoH selects some $g\in B_t$ such that $f(g)\le \tau-\epsilon$, then
\begin{equation}
\label{eq:sigma_lower_app}
\sigma_t(g)\;\ge\;\frac{\epsilon}{2\beta_t}.
\end{equation}
\end{lemma}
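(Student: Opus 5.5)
The plan is a proof by contradiction that compares the standardized scores that PoH ranks by. In the Gaussian case $p_t(\cdot)=\Phi(z_t(\cdot))$ with $z_t(h):=(\mu_t(h)-\tau)/\sigma_t(h)$, and $\Phi$ is strictly increasing. So ranking by $p_t$ is the same as ranking by $z_t$. If PoH puts $g$ into $B_t$ while the margin hit $g^\star\in C_t$ is left out, the top-$b$ rule forces $p_t(g)\ge p_t(g^\star)$, i.e.\ $z_t(g)\ge z_t(g^\star)$.

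\emph{Caveat on the hypothesis.} This is exactly where the hypothesis must be read as ``$g^\star\in C_t\setminus B_t$''. If $g^\star$ is itself selected in the same batch, the top-$b$ rule gives no comparison between $g$ and $g^\star$. In that case I would state the lemma for an unselected margin hit, which is the case needed in the proof sketch: a margin hit that remains unqueried while a non-hit is selected.

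\emph{Bounds from the confidence event.} On $\mathcal{E}_{\mathrm{conf}}$ I use the two one-sided inequalities
\[
\mu_t(g^\star)\ge f(g^\star)-\beta_t\sigma_t(g^\star)\ge \tau+\epsilon-\beta_t\sigma_t(g^\star),
\qquad
\mu_t(g)\le f(g)+\beta_t\sigma_t(g)\le \tau-\epsilon+\beta_t\sigma_t(g).
\]
Dividing by the respective standard deviations gives $z_t(g^\star)\ge \epsilon/\sigma_t(g^\star)-\beta_t>-\beta_t$ and $z_t(g)\le -\epsilon/\sigma_t(g)+\beta_t$. Now suppose, for contradiction, that $\sigma_t(g)<\epsilon/(2\beta_t)$. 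Then $\epsilon/\sigma_t(g)>2\beta_t$, so $z_t(g)<-\beta_t<z_t(g^\star)$. This contradicts $z_t(g)\ge z_t(g^\star)$, and hence $\sigma_t(g)\ge\epsilon/(2\beta_t)$. Because the inequality $z_t(g)<z_t(g^\star)$ is strict, random tie-breaking cannot rescue $g$.

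\emph{Degenerate case.} The only remaining obstacle, which is minor, is a vanishing standard deviation, where $z_t$ is undefined.
\begin{itemize}
\item If $\sigma_t(g^\star)=0$, then on $\mathcal{E}_{\mathrm{conf}}$ we have $\mu_t(g^\star)=f(g^\star)>\tau$, so $p_t(g^\star)=1$.
\item If $\sigma_t(g)=0$, then $\mu_t(g)=f(g)\le\tau-\epsilon$, so $p_t(g)=0$.
\item If $0<\sigma_t(g)<\epsilon/(2\beta_t)$, the bound above gives $p_t(g)=\Phi(z_t(g))<\Phi(-\beta_t)<1$.
\end{itemize}
Using the conventions $p_t=\mathbf{1}\{\mu_t>\tau\}$ when $\sigma_t=0$, each combination again yields $p_t(g)<p_t(g^\star)$ and the same contradiction. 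I do not expect further difficulty: no probabilistic argument beyond the event $\mathcal{E}_{\mathrm{conf}}$ is required.
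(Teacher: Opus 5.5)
Your proof is correct and is essentially the paper's argument: the same two one-sided confidence bounds on $\mathcal{E}_{\mathrm{conf}}$ and the same comparison of standardized scores $(\tau-\mu_t)/\sigma_t$ forced by the top-$b$ rule yield $\epsilon/\sigma_t(g)\le 2\beta_t$; you only phrase it as a contradiction. Your caveats are genuine refinements that the paper's proof glosses over: the score comparison holds only when $g^\star\notin B_t$, so the hypothesis should read $g^\star\in C_t\setminus B_t$, and the $\sigma_t=0$ cases need the separate treatment you give.
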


\begin{proof}
On $\mathcal{E}_{\mathrm{conf}}$, for the remaining margin-hit $g^\star$ we have
$\mu_t(g^\star)\ge f(g^\star)-\beta_t\sigma_t(g^\star)\ge \tau+\epsilon-\beta_t\sigma_t(g^\star)$, hence
\[
\frac{\tau-\mu_t(g^\star)}{\sigma_t(g^\star)} \le \beta_t - \frac{\epsilon}{\sigma_t(g^\star)}.
\]
For the selected non-hit $g$ we have
$\mu_t(g)\le f(g)+\beta_t\sigma_t(g)\le \tau-\epsilon+\beta_t\sigma_t(g)$, so
\[
\frac{\tau-\mu_t(g)}{\sigma_t(g)} \ge \frac{\epsilon}{\sigma_t(g)}-\beta_t.
\]
PoH selects top-$b$ by $p_t(\cdot)$, equivalently it selects points with small
$(\tau-\mu_t(\cdot))/\sigma_t(\cdot)$, so for any selected $g\in B_t$ we must have
\[
\frac{\tau-\mu_t(g)}{\sigma_t(g)} \le \frac{\tau-\mu_t(g^\star)}{\sigma_t(g^\star)}.
\]
Combining the displays yields $\epsilon/\sigma_t(g)\le 2\beta_t$, proving \eqref{eq:sigma_lower_app}.
\end{proof}

\subsubsection*{A.3. Bounding the cumulative uncertainty under batch feedback}

\begin{lemma}[Sum of standard deviations under batch feedback]
\label{lem:batch_sigma_sum}
Let $\{B_t\}_{t=1}^T$ be any sequence of batches of size $b$ (possibly adaptive).
Then
\begin{equation}
\label{eq:batch_sigma_sum_app}
\sum_{t=1}^{T}\sum_{g\in B_t}\sigma_t(g)
\;\le\;
\sqrt{
\frac{2\gamma_{Tb}}{\log(1+\lambda^{-1})}
\left(
Tb \;+\; \frac{2b^2\,\gamma_T}{\log(1+\lambda^{-1})}
\right)}.
\end{equation}
\end{lemma}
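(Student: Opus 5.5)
We will prove the bound by Cauchy--Schwarz combined with a comparison between the stale posterior $\sigma_t$, computed from $\mathcal{D}_t$, and the fully sequential posterior that would have conditioned on every earlier point, including earlier points of the same batch. Order the $N=Tb$ queries as $x_1,\dots,x_N$ with $n=(t-1)b+i$. Let $\tilde\sigma_{n-1}(x_n)$ be the GP posterior standard deviation at $x_n$ given $x_1,\dots,x_{n-1}$. Let $\sigma_t(x_n)=\sigma_{(t-1)b}(x_n)$ be the one the algorithm actually uses.

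\textbf{Step 1 (Cauchy--Schwarz).} First,
\[
\sum_{t}\sum_{g\in B_t}\sigma_t(g)\le\sqrt{N\sum_n\sigma_t^2(x_n)}.
\]
So it suffices to bound $\sum_n\sigma_t^2(x_n)$. We will not do this directly. Instead we write
\[
\sigma_t(x_n)=\tilde\sigma_{n-1}(x_n)\cdot\frac{\sigma_t(x_n)}{\tilde\sigma_{n-1}(x_n)}.
\]

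\textbf{Step 2 (sequential term).} The standard argument of \citet{srinivas2009gaussian} applies. With $k\le 1$ we have $s\le \lambda^{-1}\log(1+\lambda^{-1}s)/\log(1+\lambda^{-1})$ for $s\in[0,1]$. Together with the chain rule $I(y_{1:N};f)=\frac12\sum_n\log(1+\lambda^{-1}\tilde\sigma^2_{n-1}(x_n))$, this gives
\[
\sum_n\tilde\sigma^2_{n-1}(x_n)\le \frac{2\gamma_{N}}{\log(1+\lambda^{-1})},
\]
after absorbing the factor $\lambda^{-1}$ into the normalization. This yields the $\frac{2\gamma_{Tb}}{\log(1+\lambda^{-1})}$ prefactor.

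\textbf{Step 3 (batch penalty; main obstacle).} We must control how much larger $\sigma_t$ is than $\tilde\sigma_{n-1}$ because up to $b-1$ same-batch observations are missing. We plan the following decomposition:
\[
\sigma_t^2(x_n)=\tilde\sigma^2_{n-1}(x_n)+\bigl(\sigma_t^2(x_n)-\tilde\sigma^2_{n-1}(x_n)\bigr).
\]
The difference equals the variance reduction at $x_n$ obtained by adding the in-batch points $x_{(t-1)b+1},\dots,x_{n-1}$. By the rank-one update formula, each added point $x_m$ reduces the variance at $x_n$ by at most $\tilde\sigma^2_{m-1}(x_m)\cdot(\text{const})$. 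More usefully, the reduction is bounded by $2\lambda\cdot$ the information gained by those observations about $f(x_n)$, which in turn is at most the information $I_t$ gained by the whole batch $t$. Hence each of the $b$ points of batch $t$ pays at most $b\,I_t$-type terms. A cruder but sufficient route is to bound the extra variance summed over the batch by $b\sum_{m\in B_t}\min(1,\tilde\sigma^2_{m-1}(x_m))\lesssim b\cdot\frac{2I_t}{\log(1+\lambda^{-1})}$.

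Summing over $t$ then needs a bound on $\sum_t I_t$ that scales like $b\gamma_T$, not $\gamma_{Tb}$. We would obtain this by viewing each batch as a single ``super-observation'' and applying the information-gain chain rule across $T$ rounds. This step is the one we expect to require the most care; we may need to accept $b\gamma_{Tb}$, or argue as in Desautels et al.\ (GP-BUCB) that the per-batch conditional information is bounded.

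\textbf{Step 4 (assembly).} Steps 2 and 3 give
\[
\sum_n\sigma_t^2(x_n)\le \frac{2\gamma_{Tb}}{\log(1+\lambda^{-1})}+\frac{2b^2\gamma_T}{\log(1+\lambda^{-1})}\cdot(\cdots).
\]
Rather than pairing this with the factor $N$ from Step 1 naively, we apply Cauchy--Schwarz in the weighted form
\[
\sum_n \tilde\sigma_{n-1}(x_n)\cdot r_n\le\sqrt{\sum_n\tilde\sigma^2_{n-1}(x_n)\cdot\sum_n r_n^2},
\qquad r_n=\frac{\sigma_t(x_n)}{\tilde\sigma_{n-1}(x_n)}.
\]
We then bound $\sum_n r_n^2\le N+\frac{2b^2\gamma_T}{\log(1+\lambda^{-1})}$, writing $r_n^2=1+(\text{excess ratio})$ and using Step 3 for the excess. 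This produces exactly the product form $\sqrt{\frac{2\gamma_{Tb}}{\log(1+\lambda^{-1})}\bigl(Tb+\frac{2b^2\gamma_T}{\log(1+\lambda^{-1})}\bigr)}$.
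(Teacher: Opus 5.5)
The paper does not actually prove this lemma: its proof only cites a ``Lemma~A.1 from the appendix derivation'' that appears nowhere. So your proposal has to stand on its own, and there is a genuine gap at Step~4, where the claimed estimate $\sum_n r_n^2\le Tb+\frac{2b^2\gamma_T}{\log(1+\lambda^{-1})}$ is false.

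Step~3 can only control the \emph{additive} excess $\sigma_t^2(x_n)-\tilde\sigma^2_{n-1}(x_n)$. But $r_n^2-1$ is that excess \emph{divided by} $\tilde\sigma^2_{n-1}(x_n)$. Same-batch near-duplicates push $\tilde\sigma^2_{n-1}(x_n)$ down to about $\lambda/(\lambda+b-1)$ while $\sigma_t(x_n)$ stays of order one. Concretely, take $T=1$ and a batch of distinct points $x_1,\dots,x_b$ with $k(x_i,x_j)\to 1$ (e.g.\ an SE kernel with the points collapsing together). Then $\sigma_1(x_i)=1$ and $\tilde\sigma^2_{i-1}(x_i)\to\lambda/(\lambda+i-1)$, so
\[
\sum_{i=1}^{b} r_i^2 \;\longrightarrow\; b+\frac{b(b-1)}{2\lambda},
\qquad\text{whereas}\qquad
Tb+\frac{2b^2\gamma_T}{\log(1+\lambda^{-1})} \;=\; b+b^2 ,
\]
using $\gamma_1=\tfrac12\log(1+\lambda^{-1})$ when $\max_g k(g,g)=1$. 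Your inequality therefore fails once $\lambda<(b-1)/(2b)$, by an unbounded factor as $\lambda\to 0$. The ratio $r_n$ admits only the multiplicative GP-BUCB control (exponential of the information carried by the earlier same-batch points), and that control is tight here. This is exactly why Desautels et al.\ need an initialization phase.

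Worse, in this example $\sum_i\tilde\sigma^2_{i-1}(x_i)\ge\tilde\sigma_0^2(x_1)=1$, while the right-hand side of the lemma is at most $\sqrt{b(b+b^2)}$. So for small $\lambda$ the weighted Cauchy--Schwarz quantity $\sqrt{\sum_n\tilde\sigma^2_{n-1}(x_n)\sum_n r_n^2}$ itself exceeds the target. No sharper estimate of $\sum_n r_n^2$ can rescue the factorization $\sigma_t=\tilde\sigma_{n-1}\cdot r_n$.

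What your Steps~2--3 do support is an additive bound. Adding a same-batch point $x_m$ lowers the variance at $x_n$ by
\[
\frac{\tilde k_{m-1}(x_n,x_m)^2}{\lambda+\tilde\sigma^2_{m-1}(x_m)}\;\le\;\frac{u_m}{1+u_m}\;\le\;\log(1+u_m),
\qquad u_m=\lambda^{-1}\tilde\sigma^2_{m-1}(x_m),
\]
where $\tilde k_{m-1}$ is the sequential posterior covariance and we use $k\le 1$. Two small slips along the way: your ``$2\lambda\cdot$information'' has a spurious $\lambda$, and the scalar inequality in Step~2 has a spurious $\lambda^{-1}$ (the correct form is $s\le\log(1+\lambda^{-1}s)/\log(1+\lambda^{-1})$ on $[0,1]$). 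The variance bound gives $\sigma_t^2(x_n)\le\tilde\sigma^2_{n-1}(x_n)+2I_t$, with $I_t$ the conditional information of batch $t$.

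The obstacle you flagged in Step~3 is then easy: $\sum_t I_t=I(y_{1:Tb};f)\le\gamma_{Tb}\le b\gamma_T$. The last step follows from Fischer's inequality, i.e.\ subadditivity of $\log\det$ over a partition into $b$ blocks of size $T$. Hence $\sum_n\sigma_t^2(x_n)\le\frac{2\gamma_{Tb}}{\log(1+\lambda^{-1})}+2b^2\gamma_T$, and plain Cauchy--Schwarz gives $\sum_n\sigma_t(x_n)\le\sqrt{Tb\bigl(\frac{2\gamma_{Tb}}{\log(1+\lambda^{-1})}+2b^2\gamma_T\bigr)}$.

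That bound is correct and of the same flavour. However, it has $Tb$ rather than $2\gamma_{Tb}/\log(1+\lambda^{-1})$ multiplying the batch penalty, so it is not the stated product form, and it is asymptotically weaker whenever $\gamma_n=o(n)$. Establishing the lemma exactly as stated would require a genuinely different argument, which neither your proposal nor the paper supplies.
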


\begin{proof}
This is Lemma~A.1 (Sum of standard deviations under batch feedback) from the appendix derivation.
\end{proof}

\subsubsection*{A.4. Completing the proof}

On $\mathcal{E}_{\mathrm{conf}}$, applying Lemma~\ref{lem:cdf_lip} to each queried point gives
\begin{equation}
\label{eq:sum_p_lb_app}
\sum_{t=1}^T\sum_{i=1}^b p_t(g_{t,i})
\;\ge\;
\frac{N}{2}
-\frac{1}{\sqrt{2\pi}}
\sum_{t=1}^T\sum_{g\in B_t}
\left(\frac{\tau-\mu_t(g)}{\sigma_t(g)}\right)_+.
\end{equation}
Using Lemma~\ref{lem:mistake_large_sigma} and the margin assumption, the cumulative positive-part term in
\eqref{eq:sum_p_lb_app} can be bounded by
\[
\mathcal{O}\!\left(\frac{\beta_{N}^{3}}{\epsilon^2}\right)\,
\sum_{t=1}^T\sum_{g\in B_t}\sigma_t(g).
\]
Substituting Lemma~\ref{lem:batch_sigma_sum} yields
\begin{equation}
\label{eq:sum_p_final_app}
\sum_{t=1}^T\sum_{i=1}^b p_t(g_{t,i})
\;\ge\;
\frac{N}{2}
-
\mathcal{O}\!\left(
\frac{\beta_{N}^{3}}{\epsilon^2}\,
\sqrt{
\frac{\gamma_{N}}{\log(1+\lambda^{-1})}
\left(
N \;+\; \frac{b^2\,\gamma_T}{\log(1+\lambda^{-1})}
\right)}
\right).
\end{equation}
Finally, combining \eqref{eq:sum_p_final_app} with Lemma~\ref{lem:hit_concentration} and taking a union bound with
\eqref{eq:conf_event_app} implies that with probability at least $1-\delta$,
\[
H_T
\;\ge\;
\frac{N}{2}
-
\sqrt{\tfrac{1}{2}N\log\tfrac{4}{\delta}}
-
\mathcal{O}\!\left(
\frac{\beta_{N}^{3}}{\epsilon^2}\,
\sqrt{
\frac{\gamma_{N}}{\log(1+\lambda^{-1})}
\left(
N \;+\; \frac{b^2\,\gamma_T}{\log(1+\lambda^{-1})}
\right)}
\right),
\]
which matches \eqref{eq:thm_poh_batch_final} (absorbing constants and the minor $\log(4/\delta)$ adjustment into the stated form).
This concludes the proof.
\qed

\section{Datasets}\label{app:datasets}
\subsection{Synthetic Datasets}\label{app:datasets_synthetic}

\subsubsection{Sine (1D).}
The simplest baseline dataset consists of a one-dimensional sinusoidal function with additive Gaussian noise. Given input $x \in [0, 1]$, the target is:
\begin{equation}
    y = \sin(2\pi x) + \epsilon, \quad \epsilon \sim \mathcal{N}(0, \sigma^2)
\end{equation}
where $\sigma = 0.05$. This dataset provides a minimal test case with a single global maximum at $x = 0.25$, enabling rapid iteration and debugging of acquisition functions before scaling to higher dimensions.

\subsubsection{Sine-2D.}
The two-dimensional extension uses a weighted combination of sinusoidal components with learnable structure. For input $\mathbf{x} = (x_1, x_2) \in [-\pi, \pi]^2$, the target is:
\begin{equation}
    y = \frac{1}{K} \sum_{k=1}^{K} \sin\left(\mathbf{w}_k^\top \mathbf{x} + \phi_k\right)
\end{equation}
where $\mathbf{w}_k \in \mathbb{R}^2$ are weight vectors, $\phi_k$ are phase shifts, and $K=2$ components are used. The default weight matrix is:
\begin{equation}
    \mathbf{W} = \begin{pmatrix} 0.25 & -1/\pi \\ 0.1 & 0.02 \end{pmatrix} + \boldsymbol{\epsilon}_W, \quad \boldsymbol{\epsilon}_W \sim \mathcal{N}(0, 0.05^2 \mathbf{I})
\end{equation}
with seed-dependent jitter $\boldsymbol{\epsilon}_W$ and phases $\phi_k \sim \mathcal{U}(-\pi, \pi)$. This creates distinct but comparable landscapes across random seeds, testing robustness to function variability.

\subsubsection{Branin-Hoo (2D).}
The Branin-Hoo function is a classical optimization benchmark with three global minima, commonly used to evaluate global optimization algorithms. The standard formulation is:
\begin{equation}
    f(x_1, x_2) = a\left(x_2 - b x_1^2 + c x_1 - r\right)^2 + s(1 - t)\cos(x_1) + s
\end{equation}
with canonical parameters $a = 1$, $b = 5.1/(4\pi^2)$, $c = 5/\pi$, $r = 6$, $s = 10$, $t = 1/(8\pi)$. The original domain $x_1 \in [-5, 10]$, $x_2 \in [0, 15]$ is rescaled to the unit square $[0, 1]^2$ via:
\begin{equation}
    \tilde{x}_1 = 15 x_1 - 5, \quad \tilde{x}_2 = 15 x_2
\end{equation}
We invert and normalize the function for maximization:
\begin{equation}
    y = \frac{-f(\tilde{x}_1, \tilde{x}_2) - y_{\min}}{y_{\max} - y_{\min}} + \epsilon, \quad \epsilon \sim \mathcal{N}(0, 0.02^2)
\end{equation}
The three global maxima (originally minima) at $(\pi, 2.275)$, $(-\pi, 12.275)$, and $(9.42, 2.475)$ test whether acquisition functions can discover multiple optima rather than converging prematurely to a single peak.

The following datasets simulate the structure of gene perturbation experiments, where the goal is to identify genes whose perturbation produces a desired phenotype.

\subsubsection{Perturbation-Pathways (4D).}
This dataset introduces \emph{pathway structure}---a core organizing principle of molecular biology where genes with related functions cluster into functional modules. The feature vector $\mathbf{x} \in \mathbb{R}^4$ consists of:
\begin{equation}
    \mathbf{x} = [\underbrace{\mathbf{g}}_{\text{gene} \in \mathbb{R}^2}, \underbrace{\mathbf{z}}_{\text{context} \in \mathbb{R}^2}]
\end{equation}

Genes are organized into $K = 4$ pathways, each centered at a distinct location in the 2D gene-embedding space:
\begin{align}
    & \boldsymbol{\mu}_1 = (0.2, 0.2), 
    \boldsymbol{\mu}_2 = (0.8, 0.2), \\&
    \boldsymbol{\mu}_3 = (0.2, 0.8), 
    \boldsymbol{\mu}_4 = (0.8, 0.8)
\end{align}
Each of the $n_{\text{genes}}$ genes is assigned to a pathway, and its embedding is sampled around the pathway center:
\begin{equation}
    \mathbf{g}_i \sim \mathcal{N}(\boldsymbol{\mu}_{k(i)}, \sigma_g^2 \mathbf{I}), \quad \sigma_g = 0.12
\end{equation}
where $k(i)$ denotes the pathway assignment of gene $i$. This creates spatial clusters in embedding space, reflecting the biological reality that genes in the same pathway have similar functional annotations.

The phenotype combines pathway-level and gene-specific effects:
\begin{align}
    y &= \underbrace{A_{k} \exp\left(-\frac{\|\mathbf{z} - \boldsymbol{\mu}_k\|^2}{2\sigma_k^2}\right)}_{\text{pathway activation}} + \underbrace{\alpha \sin(4\pi g_1) \cos(4\pi g_2)}_{\text{gene-specific modulation}} \nonumber \\
    &\quad + \underbrace{0.3 \sin(2\pi(g_1 + z_1)) \cos(2\pi(g_2 + z_2))}_{\text{gene} \times \text{context interaction}} + \epsilon
\end{align}
where $A_k \sim \mathcal{U}(0.8, 1.5) \cdot 2.5$ and $\sigma_k \sim \mathcal{U}(0.15, 0.25)$ are pathway-specific amplitude and width parameters, $\alpha = 0.4$ scales the gene-specific effect, and $\epsilon \sim \mathcal{N}(0, 0.08^2)$.

The key insight is the \emph{pathway activation term}: when the context $\mathbf{z}$ is close to pathway center $\boldsymbol{\mu}_k$, perturbing \emph{any} gene in that pathway produces a strong phenotypic effect. This models the biological phenomenon where:
\begin{itemize}[leftmargin=*,nosep]
    \item Pathway activity is context-dependent (e.g., a signaling pathway may only be active in certain cell states).
    \item Genes within an active pathway are functionally redundant for the phenotype.
    \item The Gaussian activation creates smooth spatial structure that is learnable by GP-like models with RBF kernels.
\end{itemize}

\subsubsection{Perturbation-SEM (6D).}
This dataset models single-cell perturbation experiments with explicit causal structure. Each sample represents a perturbation characterized by three latent factors: \emph{gene identity}, \emph{cell state}, and \emph{environmental condition}. The feature vector $\mathbf{x} \in \mathbb{R}^6$ is partitioned as:
\begin{equation}
    \mathbf{x} = [\underbrace{\mathbf{g}}_{\text{gene} \in \mathbb{R}^2}, \underbrace{\mathbf{c}}_{\text{cell} \in \mathbb{R}^2}, \underbrace{\mathbf{e}}_{\text{env} \in \mathbb{R}^2}]
\end{equation}
where $\mathbf{g}$ is a 2D gene embedding drawn from a discrete set of $n_{\text{genes}}$ pre-defined gene positions, while $\mathbf{c}$ and $\mathbf{e}$ represent continuous cell-state and environment coordinates sampled uniformly from $[0, 1]^2$.

The phenotype $y$ is generated via an additive structural equation model with interaction terms:
\begin{align}
    y &= \underbrace{1.5 \sin(2\pi g_1) \cos(2\pi g_2)}_{\text{gene main effect}} +
    \\& \underbrace{0.8 \sin(\pi c_1) + 0.4 \cos(\pi c_2)}_{\text{cell-state effect}} \nonumber \\
    &\quad + \underbrace{0.6 \sin(\pi e_1) \cos(\pi e_2)}_{\text{environment effect}} + \\& \underbrace{0.5 \sin(\pi(g_1 + c_1)) \cos(\pi(g_2 + c_2))}_{\text{gene} \times \text{cell interaction}} + \epsilon
\end{align}
where $\epsilon \sim \mathcal{N}(0, 0.08^2)$.

This formulation captures the biological intuition that gene identity determines a baseline perturbation effect via the smooth gene-embedding function.
Moreover, Cell state modulates the phenotypic readout (e.g., different cell types respond differently), and environmental conditions provide additional context (e.g., drug treatment, growth factors). The gene$\times$cell interaction term captures context-dependent gene effects, a key challenge in biological screens where the same perturbation can have opposite effects in different cellular contexts.

\subsection{Dataset Complexity}\label{app:dataset_complexity}
\paragraph{Dataset Complexity Metrics.}
We design the following metrics to characterize the optimization landscapes underlying these 10 datasets.
\begin{itemize}[leftmargin=*,nosep]
    \item \textbf{Local Smoothness} ($\mathcal{S}$): The average absolute difference in phenotype values between $k$-nearest neighbors in feature space. Higher values indicate more rugged phenotype landscapes that are harder for surrogate models to approximate.
    
    \item \textbf{Hit Cluster Count} ($n_{\text{clusters}}$): The number of spatially distinct hit clusters identified by DBSCAN in the normalized feature space. Multiple clusters indicate fragmented hit regions requiring exploration.
    
    \item \textbf{Effective Dimensionality} ($d_{\text{eff}}$): The number of principal components required to explain 95\% of variance in the feature space. High values indicate intrinsically high-dimensional data.
    
    \item \textbf{Feature--Phenotype Correlation} ($\rho_{\max}$): The maximum absolute Spearman correlation between any single feature and the phenotype. Low values indicate complex, nonlinear relationships.
    
    \item \textbf{Distance--Phenotype Correlation} ($\rho_{d,y}$): Spearman correlation between pairwise distances and phenotype differences. Measures how well proximity in feature space predicts phenotypic similarity.
\end{itemize}

\subsection{Real Datasets.}
To bridge the gap between our synthetic ablation studies and real-world experimental design, we conduct a comprehensive complexity analysis of five GeneDisco benchmark datasets~\citep{mehrjou2021genedisco}. These datasets represent genome-wide CRISPR knockout screens targeting diverse biological phenotypes: T-cell activation (IFN-$\gamma$ and IL-2 production), cancer cell killing (NK cells), neurodegeneration (tau protein), and viral infection (SARS-CoV-2).
Table~\ref{tab:real_data_overview_complexity} summarizes the basic characteristics of each dataset.

\begin{table*}[htbp]
\centering
\caption{Overview and complexity metrics for all GeneDisco datasets using Achilles features.
Sample sizes vary due to gene overlap between screen results and feature databases.
Smoothness ($\mathcal{S}$) is in natural target units.
$d_{\text{eff}}$ is effective dimensionality at 95\% variance.}
\label{tab:real_data_overview_complexity}
\scriptsize
\setlength{\tabcolsep}{5pt}
\begin{tabular}{lrrrrrrrr}
\toprule
\textbf{Dataset} &
\textbf{Genes} & \textbf{Feats.} & \textbf{Hits (10\%)} &
\textbf{$\mathcal{S}$} & \textbf{Clusters} & \textbf{$d_{\text{eff}}$} &
\textbf{$\rho_{\max}$} & \textbf{$\rho_{d,y}$} \\
\midrule

Schmidt IFN-$\gamma$ & 17{,}470 & 808 & 1{,}747 & 0.215 & 2 & 51 & 0.038 & 0.022 \\
Schmidt IL-2         & 17{,}470 & 808 & 1{,}747 & 0.207 & 7 & 51 & 0.113 & 0.106 \\
Zhuang NK            & 17{,}533 & 808 & 1{,}754 & 1.741 & 6 & 51 & 0.060 & 0.006 \\
Sanchez Tau          & 17{,}181 & 808 & 1{,}719 & 0.625 & 3 & 51 & 0.068 & 0.054 \\
Zhu SARS-CoV-2       & 16{,}751 & 808 & 1{,}676 & 0.913 & 1 & 51 & 0.154 & 0.126 \\

\bottomrule
\end{tabular}
\end{table*}

\paragraph{Key Observations.}

\begin{enumerate}
    \item \textbf{Schmidt IL-2 has the smoothest landscape}: With local smoothness $\mathcal{S} = 0.22$, this dataset has the most learnable phenotype function. Neighboring genes in feature space tend to have similar IL-2 production effects, making GP-based surrogates effective.
    
    \item \textbf{Zhuang NK is the most challenging}: With $\mathcal{S} = 1.76$, the NK cell killing phenotype shows high local variability. Genes with similar embeddings can have drastically different effects on NK cytotoxicity.
    
    \item \textbf{SARS-CoV-2 has the strongest feature--phenotype correlations}: The maximum single-feature correlation ($\rho_{\max} = 0.134$) suggests some features are individually predictive of viral susceptibility, which may aid acquisition functions.
    
    \item \textbf{All datasets have moderate hit clustering}: Between 10--17 DBSCAN clusters indicates that hits are distributed across multiple regions of the feature space, requiring exploration to achieve comprehensive coverage.
\end{enumerate}

\paragraph{Practical Implications}

Based on the complexity analysis, we offer dataset-specific recommendations:

\paragraph{Schmidt IL-2/IFN-$\gamma$ (T-cell activation).}
These datasets have the smoothest landscapes ($\mathcal{S} \approx 0.22$), making them most amenable to model-based acquisition. The \texttt{probability\_hit} method, which leverages GP uncertainty, should perform well here.

\paragraph{Zhu SARS-CoV-2 (viral infection).}
Despite moderate smoothness, this dataset has the strongest single-feature predictors ($\rho_{\max} = 0.15$). Feature selection or regularization may improve surrogate model quality.

\paragraph{Zhuang NK (cancer killing).}
With the highest smoothness ($\mathcal{S} = 1.77$), this dataset poses the greatest challenge for model-based methods. Increased exploration or larger batch sizes may be necessary to overcome the rugged landscape.

\section{Acquisition Function: Thompson Hit}\label{app:thompson_hit}
\begin{algorithm}[h
]
\caption{Thompson-Hit}
\label{alg:thompsonhit}
\begin{algorithmic}[1]
\REQUIRE Candidate set $\mathcal{G}$, threshold $\tau$, batch size $b$, rounds $T$, posterior over $f$
\STATE Initialize $\mathcal{D}_1 \gets \emptyset$
\FOR{$t = 1$ to $T$}
    \STATE Fit/condition posterior on $\mathcal{D}_t$
    \STATE $C_t \gets \mathcal{G} \setminus \bigcup_{s<t} B_s$
    \STATE Sample $\tilde f_t \sim p(f \mid \mathcal{D}_t)$
    \STATE $\tilde H_t \gets \{ g \in C_t : \tilde f_t(g) > \tau \}$
    \IF{$|\tilde H_t| \ge b$}
        \STATE $B_t \gets$ UniformRandomSubset$(\tilde H_t, b)$
    \ELSE
        \STATE $B_t \gets \tilde H_t$
        \STATE $R_t \gets C_t \setminus \tilde H_t$
        \STATE Sort $R_t$ in descending order of $\tilde f_t(g)$
        \STATE Add the first $(b-|\tilde H_t|)$ elements of $R_t$ to $B_t$
    \ENDIF
    \STATE Evaluate $\{y_g : g \in B_t\}$
    \STATE $\mathcal{D}_{t+1} \gets \mathcal{D}_t \cup \{(g,y_g): g \in B_t\}$
\ENDFOR
\end{algorithmic}
\end{algorithm}

\newpage

\section{Additional Results}\label{app:additional_results}
\subsection{Synthetic Datasets}

\subsubsection{DiscoBax}
\label{app:discobax}
\begin{figure}[h]
    \centering
    \includegraphics[width=0.7\columnwidth]{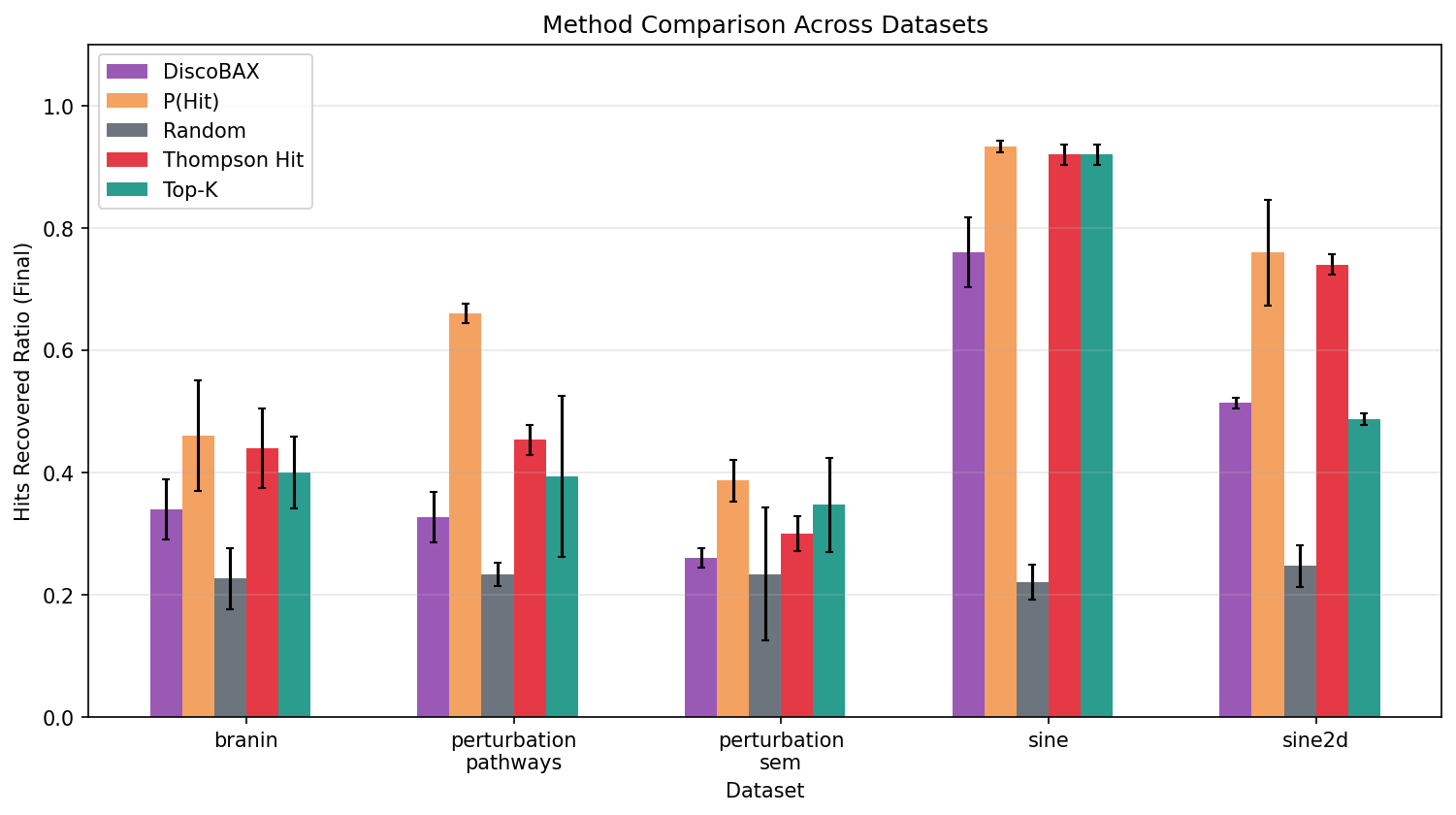}
    \caption{Method comparison with DiscoBax on 10 cycles, batches of 5 points, and threshold of 10\% (over 3 random seeds). DiscoBax consistently performs better than Random, but still significantly worse than the other methods.}
    \label{app:fig:discobax_method_comparison_bar}
\end{figure}

\begin{figure}[h]
    \centering
    \includegraphics[width=0.7\columnwidth]{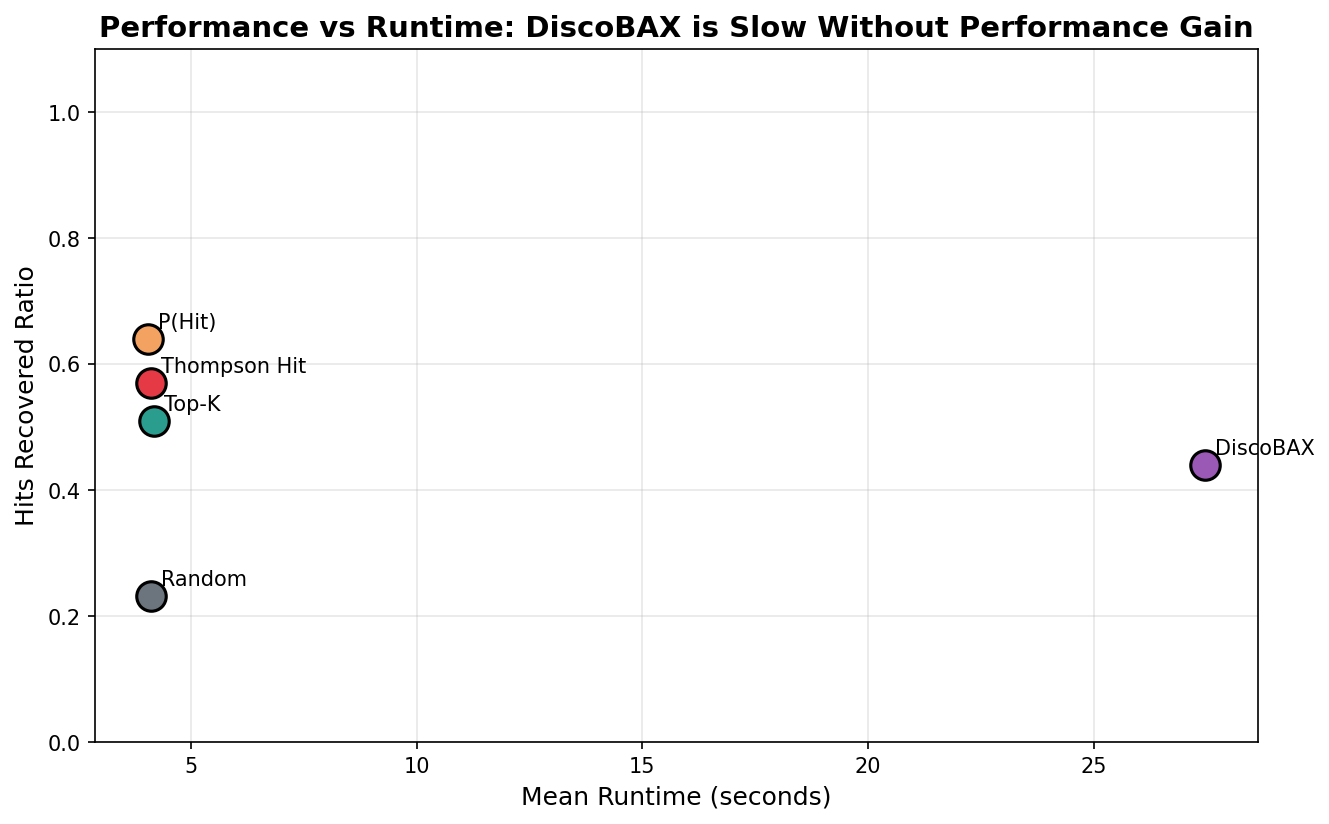}
    \caption{Hits Recovered Ratio vs. Mean Runtime on 10 cycles with batches of 5 (over 3 random seeds). DiscoBax's significantly longer runtimes, together with a sub-par performance with respect to the other methods, place it in the far right side of this plot, showcasing its general inefficiency.}
    \label{app:fig:performance_vs_runtime}
\end{figure}

\newpage

\subsubsection{Ablation: Batch-Size}

\begin{figure}[h]
    \centering
    \includegraphics[width=\columnwidth]{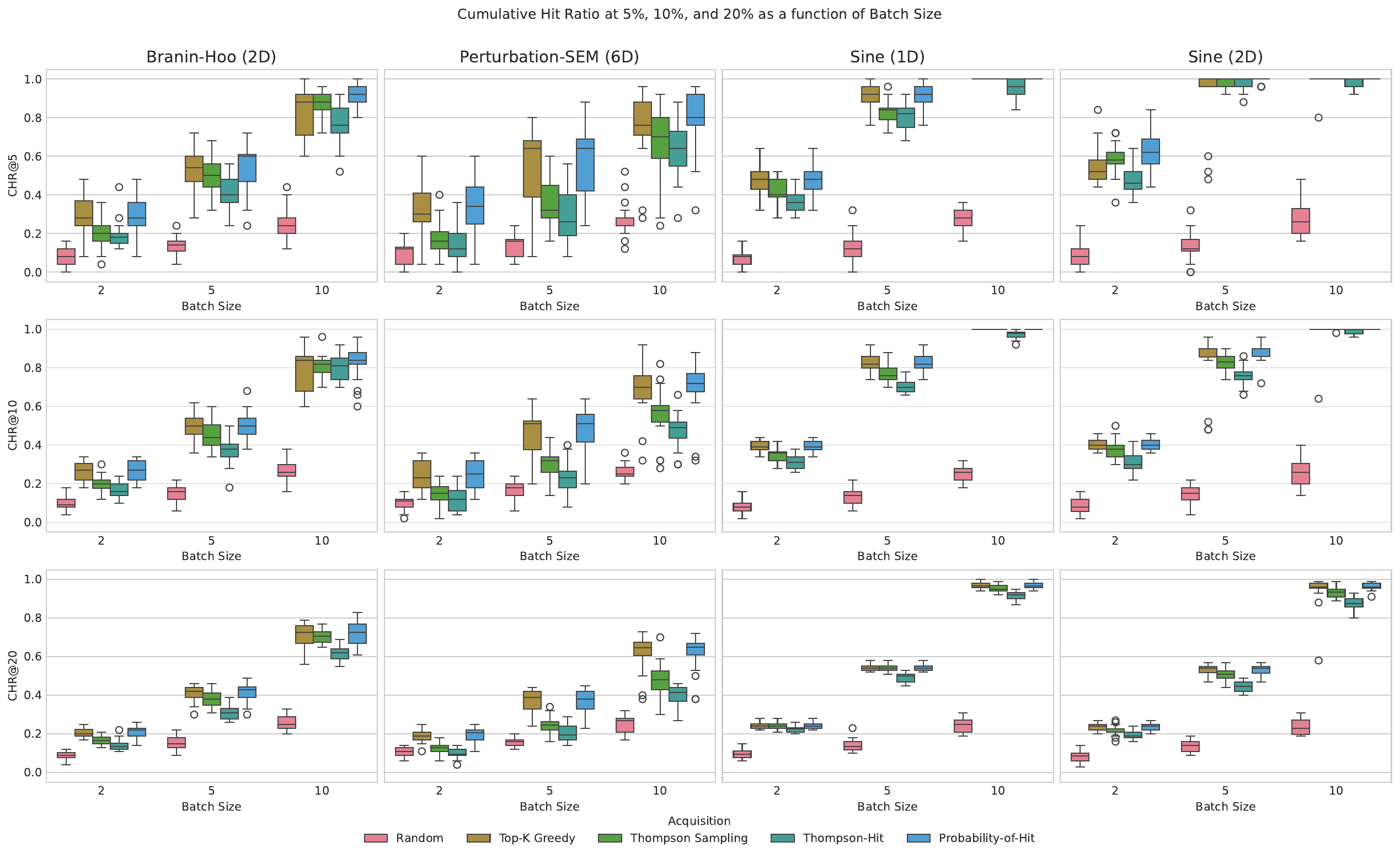}
    \caption{Impact of batch size on hit recovery for different thresholds across datasets for all acquisition functions.}
    \label{app:fig:batch_size_full_ablation}
\end{figure}

\newpage

\subsubsection{Exploration vs. Exploitation Trade-off}

\begin{figure}[h]
    \centering
    \includegraphics[width=0.9\columnwidth]{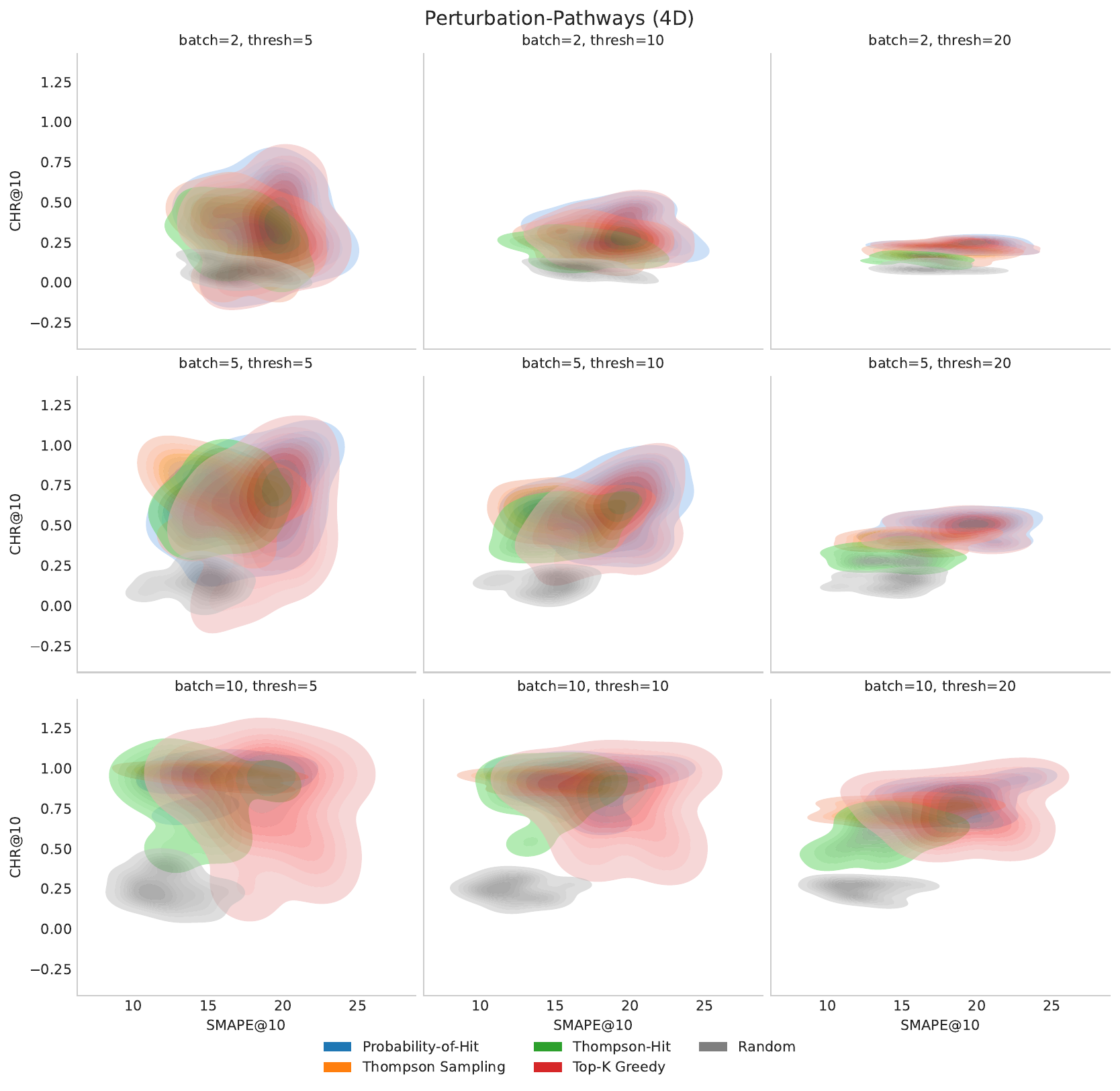}
    \caption{Trade-off between CHR@10 and SMAPE@10 on Perturbation-Pathways (4D) across acquisition functions, batch sizes, and thresholds.}
    \label{app:fig:pert_path_tradeoff}
\end{figure}

\newpage

\subsubsection{Learning Curves}

\begin{figure}[h]
    \centering
    \includegraphics[width=0.9\columnwidth]{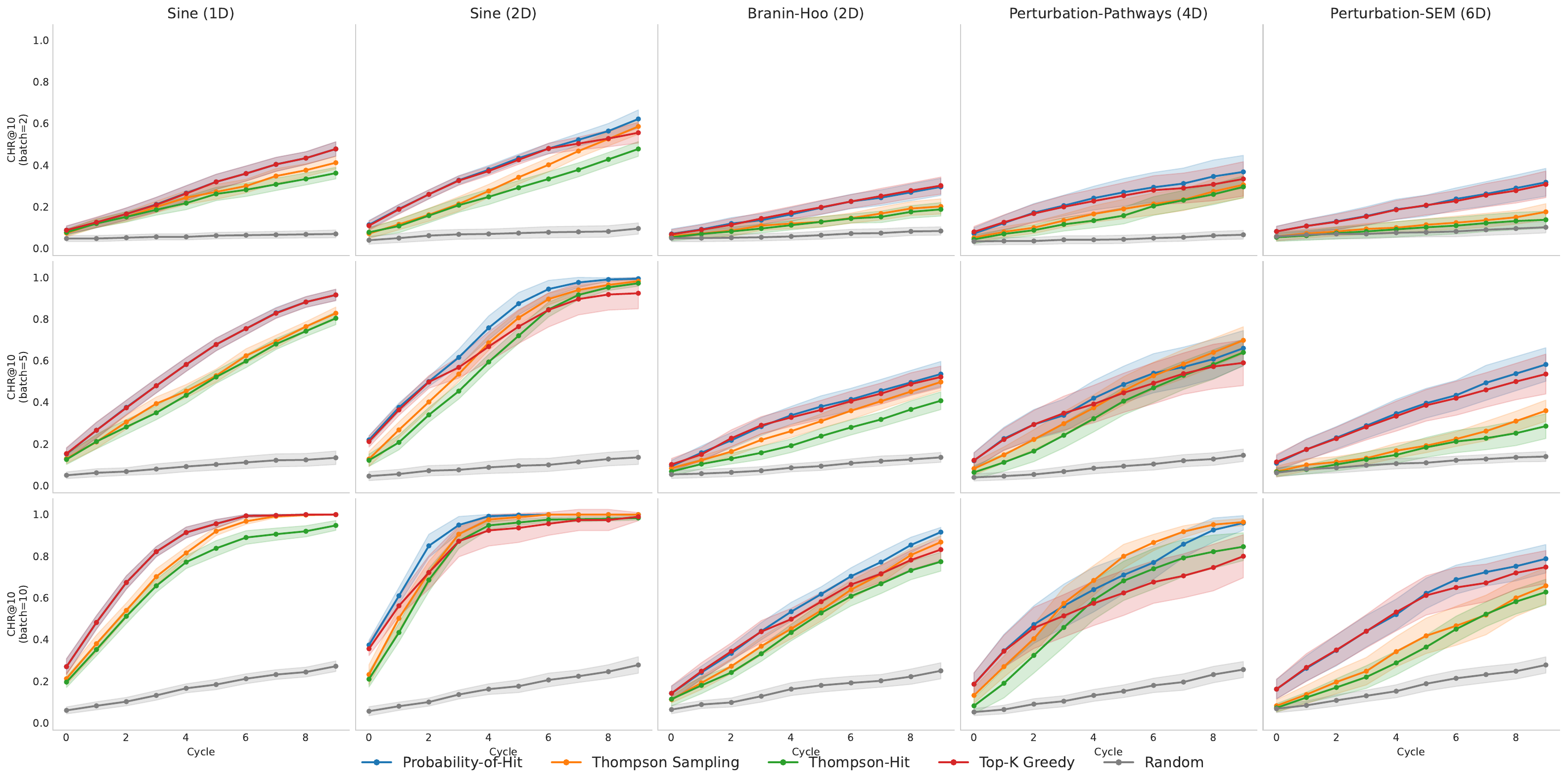}
    \caption{Learning curves for different acquisition functions across datasets and batch sizes for $\tau=5\%$.}
    \label{app:fig:learning_curves_tau5}
\end{figure}

\begin{figure}[h]
    \centering
    \includegraphics[width=0.9\columnwidth]{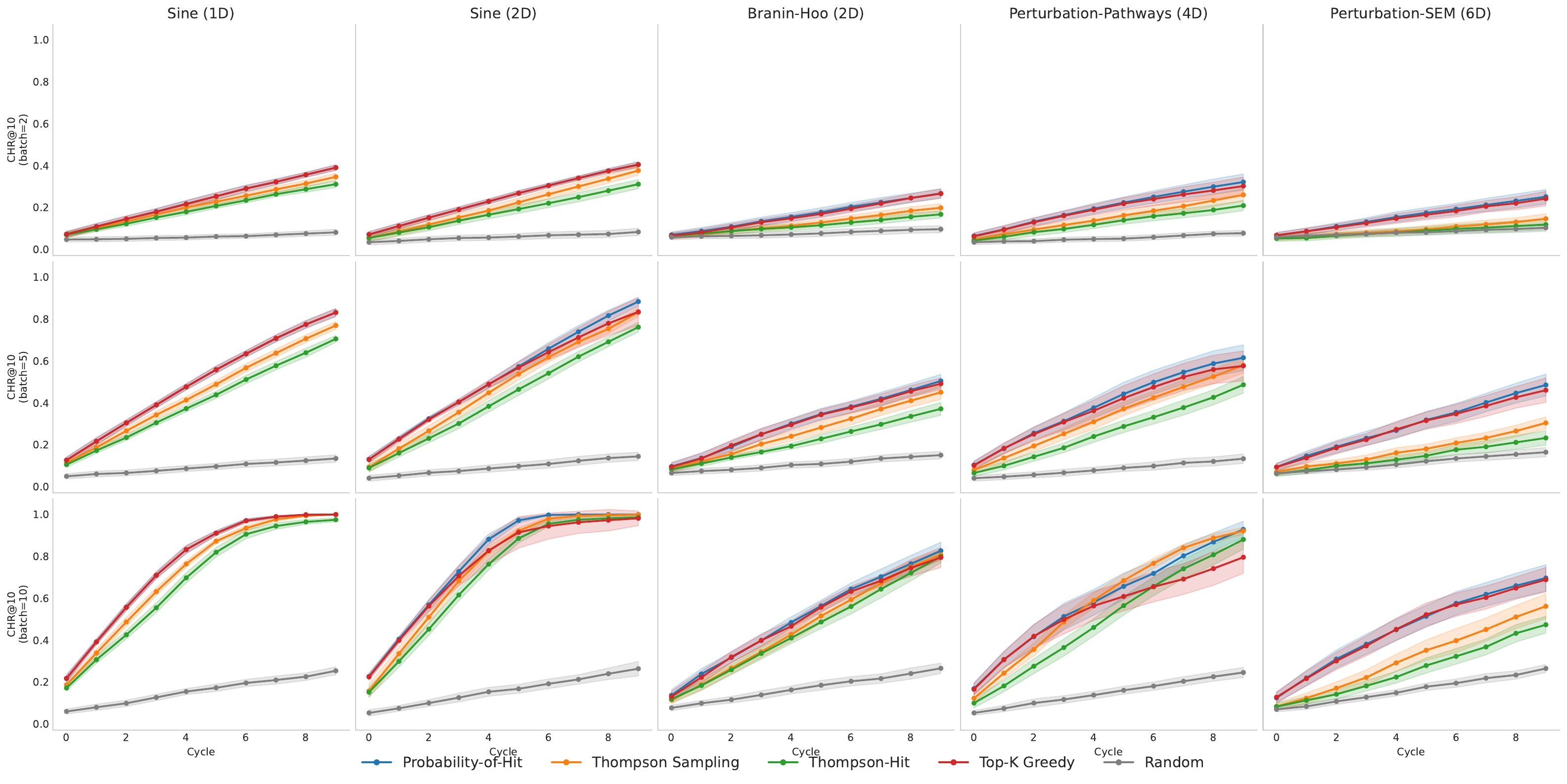}
    \caption{Learning curves for different acquisition functions across datasets and batch sizes for $\tau=20\%$.}
    \label{app:fig:learning_curves_tau10}
\end{figure}

\begin{figure}[h]
    \centering
    \includegraphics[width=0.9\columnwidth]{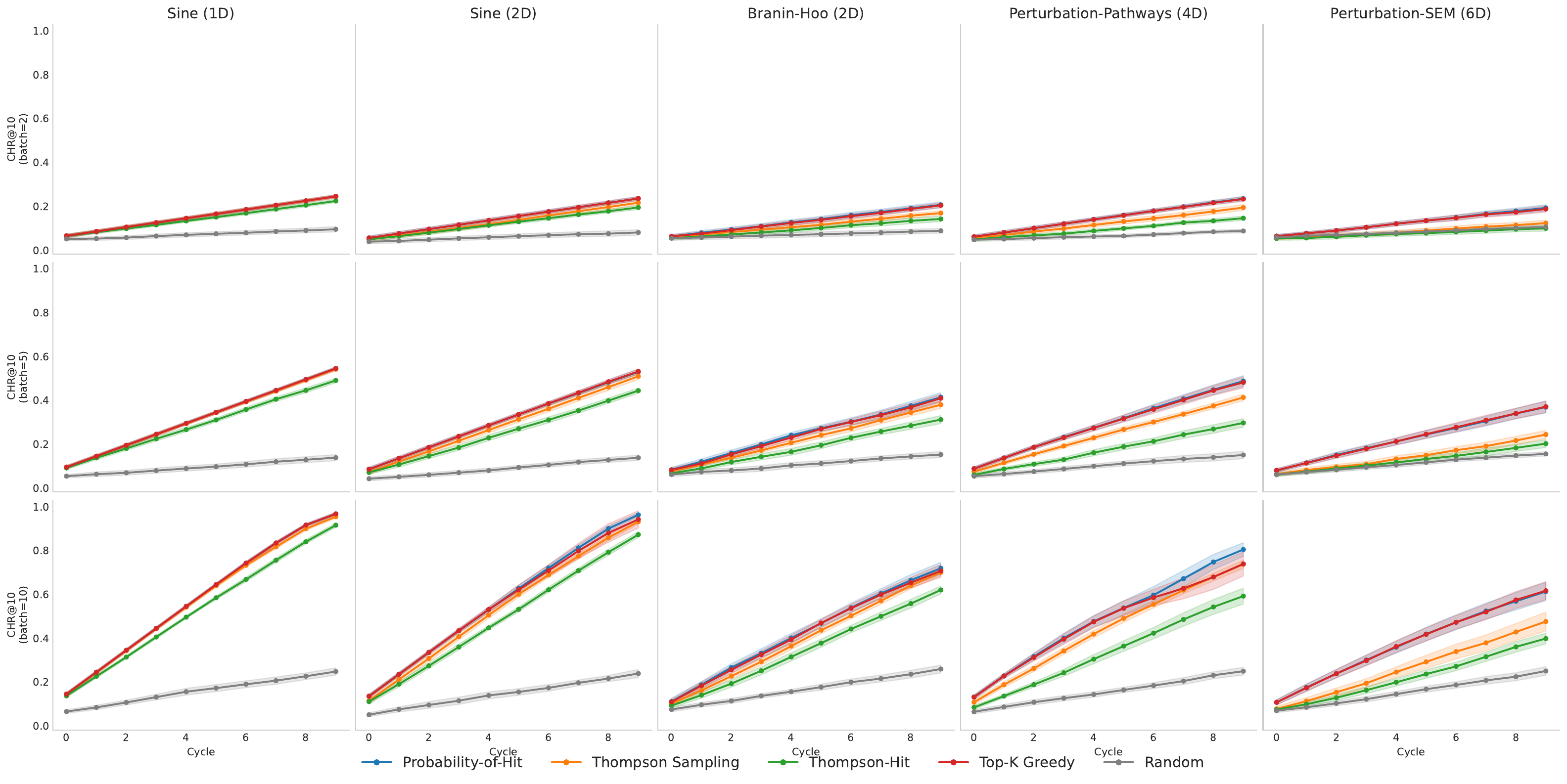}
    \caption{Learning curves for different acquisition functions across datasets and batch sizes for $\tau=20\%$.}
    \label{app:fig:learning_curves_tau20}
\end{figure}

\newpage

\subsection{Real-World Datasets}

The Schmidt IL-2 dataset with Achilles features shows the strongest Probability-Hit advantage: \textbf{+58.8 hits (+11.4\%)}. This dataset has the lowest local smoothness ($\mathcal{S} = 0.21$) in our complexity analysis, suggesting Probability-Hit excels when the phenotype landscape is learnable.

\begin{table*}[htbp]
\footnotesize
\centering
\caption{Complete hit recovery results at 5 cycles. Values are mean $\pm$ std across 5 seeds. Best method per row is \textbf{bolded}.}
\label{tab:full_results_5cycles}
\scriptsize
\resizebox{\textwidth}{!}{%
\begin{tabular}{llccccc}
\toprule
\textbf{Dataset} & \textbf{Feature} & \textbf{Probability-Hit} & \textbf{Thompson} & \textbf{Thom-Hit} & \textbf{Top-K} & \textbf{Random} \\
\midrule
Schmidt IL-2 & Achilles & \textbf{336.6$\pm$62} & 324.4$\pm$66 & 324.6$\pm$76 & 304.6$\pm$80 & 120.4$\pm$12 \\
Sanchez Tau  & Achilles & \textbf{212.6$\pm$18} & 212.6$\pm$11 & 207.0$\pm$14 & 209.8$\pm$20 & 116.4$\pm$7 \\
Schmidt IFN-$\gamma$ & Achilles & \textbf{195.8$\pm$36} & 187.2$\pm$28 & 176.6$\pm$38 & 179.0$\pm$27 & 114.4$\pm$12 \\
Zhu SARS-CoV-2 & Achilles & 137.6$\pm$21 & 135.4$\pm$12 & 135.0$\pm$11 & \textbf{141.4$\pm$20} & 120.8$\pm$6 \\
Zhuang NK    & Achilles & 128.8$\pm$10 & 126.8$\pm$10 & 127.0$\pm$10 & \textbf{129.4$\pm$12} & 122.6$\pm$20 \\
\bottomrule
\end{tabular}
}
\end{table*}

\begin{table}[htbp]
\centering
\caption{SMAPE (\%) on unseen data by acquisition method (final cycle)}
\label{tab:smape_achilles}
\begin{tabular}{lccccc}
\toprule
Dataset & Probability Hit & Random & Thompson Hit & Thompson Sampling & Top-$k$ \\
\midrule
Sanchez 2021 (Tau)    & $162.84 \pm 44.85$ & $149.34 \pm 43.42$ & $162.26 \pm 51.21$ & $\mathbf{130.88 \pm 63.35}$ & $158.26 \pm 45.67$ \\
Schmidt 2021 (IFN-$\gamma$) & $147.38 \pm 9.11$  & $145.42 \pm 10.47$ & $145.02 \pm 9.04$  & $140.95 \pm 6.63$  & $\mathbf{140.51 \pm 6.09}$ \\
Schmidt 2021 (IL-2)   & $144.91 \pm 5.81$  & $139.12 \pm 7.39$  & $141.14 \pm 7.26$  & $\mathbf{138.83 \pm 7.68}$  & $146.51 \pm 4.56$ \\
Zhu 2021 (SARS-CoV-2) & $188.59 \pm 13.91$ & $\mathbf{152.93 \pm 51.27}$ & $179.50 \pm 41.29$ & $188.50 \pm 15.42$ & $177.57 \pm 39.38$ \\
Zhuang 2019 (NK)      & $165.56 \pm 14.70$ & $170.51 \pm 9.46$  & $\mathbf{161.47 \pm 8.60}$  & $162.25 \pm 7.88$  & $169.29 \pm 10.88$ \\
\bottomrule
\end{tabular}
\end{table}

\paragraph{Practical Recommendations.}
Based on our analysis, we offer the following recommendations for practitioners:

\begin{enumerate}
    \item \textbf{Prioritize batch size over threshold refinement}: Increasing the number of genes tested per cycle provides substantially larger performance gains than adjusting the hit threshold. If resources permit, larger batches should be preferred.
    
    \item \textbf{Use hit-aware acquisition functions}: Methods like \texttt{probability\_hit} that explicitly model hit probability outperform value-based methods, especially in high-dimensional settings with dispersed hits.
    
    \item \textbf{Expect diminishing returns with dimensionality}: As feature dimensionality increases (e.g., using richer gene embeddings), surrogate model quality degrades significantly. Consider dimensionality reduction or feature selection for very high-dimensional representations.
    
    \item \textbf{Account for pathway structure}: When hits cluster by biological pathway, methods that can discover multiple disconnected regions (through exploration) may be valuable for achieving comprehensive pathway coverage.
\end{enumerate}

\paragraph{Biological Interpretation.}
The finding that hit cluster count increases with threshold (Table~\ref{tab:hit_clusters_threshold}) has an interesting biological interpretation. In genetic screens, the most extreme phenotypes (top 5\%) often arise from perturbations in a single critical pathway, while milder phenotypes (top 20\%) may involve multiple pathways with redundant or compensatory functions. Our analysis suggests that hit-aware methods are particularly valuable when seeking to discover these diverse biological mechanisms rather than just the strongest individual effects.



\end{document}